\theoremstyle{definition}
\theoremstyle{plain}
\newtheorem{thm}{Theorem}
\newtheorem{proposition}[thm]{Proposition}
\newtheorem{lemma}{Lemma}
\newtheorem{corollary}{Corollary}
\newtheorem{assumption}{Assumption}
\newcommand{\ip}[2]{\langle #1,#2 \rangle}
\newcommand{\bydef}{:=}
\newcommand{\expect}[1]{\mathbb{E}\left[{#1}\right]}
\newcommand{\prob}[1]{\mathbb{P}\left[{#1}\right]}
\newcommand{\given}{\; \big\vert \;}
\newcommand{\ignore}[1]{}
\newcommand{\shortver}[1]{}
\newcommand{\appropto}{\mathrel{\vcenter{
  \offinterlineskip\halign{\hfil$##$\cr
    \propto\cr\noalign{\kern2pt}\sim\cr\noalign{\kern-2pt}}}}}
\newcommand{\corr}{\color{black}}
\begin{document} 

\title{Thompson Sampling for Complex Online Problems}
% \author[1]{Author A\thanks{dfdf}}
% \author[1]{Author C}
% \author[2]{Author D}
% \affil[1]{Department of Electrical Engineering, Technion}
% \affil[2]{Department of Computer Science, Tel Aviv University}

\author{Aditya Gopalan\thanks{Department of Electrical Engineering,
    Technion, Haifa, Israel. Email: {\tt aditya@ee.technion.ac.il}},
  Shie Mannor\thanks{Department of Electrical Engineering, Technion,
    Haifa, Israel. Email: {\tt shie@ee.technion.ac.il}}, Yishay
  Mansour\thanks{Department of Computer Science, Tel Aviv University,
    Tel Aviv, Israel. Email: {\tt mansour@tau.ac.il}}}
\maketitle

\begin{abstract} 
  We consider stochastic multi-armed bandit problems with complex
  actions over a set of basic arms, where the decision maker plays a
  complex action rather than a basic arm in each round.
  The reward of the complex action is some function of the basic arms'
  rewards, and the feedback observed may not necessarily be the reward
  per-arm. For instance, when the complex actions are subsets of the
  arms, we may only observe the maximum reward over the chosen
  subset. Thus, feedback across complex actions may be coupled due to
  the nature of the reward function. 
  {\corr We prove a frequentist regret bound for Thompson sampling in
    a very general setting involving parameter, action and observation
    spaces and a likelihood function over them.} The bound holds for
  discretely-supported priors over the parameter space without
  additional structural properties such as closed-form posteriors,
  conjugate prior structure or independence across arms. The regret
  bound scales logarithmically with time but, more importantly, with
  an improved constant that non-trivially captures the coupling across
  complex actions due to the structure of the rewards. As
  applications, we derive improved regret bounds for classes of
  complex bandit problems involving selecting subsets of arms,
  including the first nontrivial regret bounds for nonlinear MAX
  reward feedback from subsets.
%
%   Using particle filters for computing posterior distributions which
%   lack an explicit closed-form, we present numerical results for the
%   performance of Thompson sampling for subset-selection and job
%   scheduling problems.
\end{abstract}

\section{Introduction}

%Multi-Arm Bandits
%1. classical Problem
%2. Well studied
%3. Crystal Abstraction
%4. Highly efficient algorithms (UCB+Gittins)

The stochastic Multi-Armed Bandit (MAB) is a classical framework in
machine learning and optimization.
In the basic MAB setting, there is a finite set of actions, each of
which has a reward derived from some stochastic process, and a learner
selects actions to optimize long-term performance.
The MAB model gives a crystallized abstraction of a fundamental
decision problem -- whether to explore or exploit in the face of
uncertainty. Bandit problems have been extensively studied and several
well-performing methods are known for optimizing the reward
\cite{Gittins,AuerCF02,AudBub04:minimax,GarCap11:klucb}. However, the
requirement that the actions' rewards be independent is often a severe
limitation, as seen in these examples: \\

{\bf Web Advertising:} Consider a website publisher selecting at each
time a subset of ads to be displayed to the user. As the publisher is
paid per click, it would like to maximize its revenue, but
dependencies between different ads could mean that the problem does
not ``decompose nicely''. For instance, showing two car ads might not
significantly increase the click probability over a single car ad. \\

{\bf Job Scheduling:} Assume we have a small number of resources or
machines, and in each time step we receive a set of jobs (the ``basic
arms''), where the duration of each job follows some fixed but unknown
distribution. The latency of a machine is the sum of the latencies of
the jobs (basic arms) assigned to it, and the makespan of the system
is the maximum latency across machines. Here, the decision maker's
complex action is to partition the jobs (basic arms) between the
machines, to achieve the least makespan on
average.\\

{{\bf Routing:} Consider a multi-commodity flow problem, where for
  each source-destination pair, we need to select a route (a complex
  action). In this setting the capacities of the edges (the basic
  arms) are random variables, and the reward is the total flow in the
  system at each time.
In this example, the rewards of different paths are inter-dependent,
since the flow on one path depends on which other paths where
selected.} \\

These examples motivate settings where a model more complex than the
simple MAB is required.
% (and still less than a full-blow MDP).
Our high-level goal is to describe a methodology that can tackle
bandit problems with complex action/reward structure, and also
guarantee high performance. A crucial complication in the problems
above is that it is unlikely that we will get to observe the reward of
each basic action chosen. Rather, we can hope to receive only an
aggregate reward for the complex action taken.
Our approach to complex bandit problems stems from the idea that when
faced with uncertainty, pretending to be Bayesian can be
advantageous. A purely Bayesian view of the MAB assumes that the model
parameters (i.e., the arms' distributions) are drawn from a prior
distribution. We argue that even in a frequentist setup, in which the
stochastic model is unknown but fixed, working with a fictitious prior
over the model (i.e., being \emph{pseudo-Bayesian}) helps solve very
general bandit problems with complex actions and observations. \\

Our algorithmic prescription for complex bandits is Thompson sampling
\cite{Thompson,Scott,AgrawalG}: Start with a fictitious prior
distribution over the parameters of the basic arms of the model, whose
posterior gets updated as actions are played. A parameter is randomly
drawn according to the posterior and the (complex) action optimal for
the parameter is played. The rationale behind this is twofold: (1)
Updating the posterior adds useful information about the true unknown
parameter. (2) Correlations among complex bandit actions (due to their
dependence on the basic parameters) are implicitly captured by
posterior updates on the space of basic parameters. \\

% Our main algorithmic contribution is a framework for using Thompson
% sampling with very general complex actions and/or observations. The
% algorithm we suggest is quite natural: Start with a prior distribution
% on the basic arms' parameter values; sample parameters from it;
% compute the best complex action based on the sampled value; play the
% (complex) action; and update the posterior based on the received
% observation. 
The main advantage of a pseudo-Bayesian approach like Thompson
sampling, compared to other MAB methodologies such as UCB, is that it
can handle a wide range of information models that go beyond observing
the individual rewards alone. For example, suppose we observe only the
final makespan in the multi-processor job scheduling problem above. In
Thompson sampling, we merely need to compute a posterior given this
observation and its likelihood. In contrast, it seems difficult to
adapt an algorithm such as UCB for this case without a naive
exponential dependence on the number of basic arms\footnote{The work
  of Dani et al. \cite{DaniHayKak08} first extended the UCB framework
  to the case of linear cost functions. However, for more complex,
  nonlinear rewards (e.g., multi-commodity flows or makespans), it is
  unclear how UCB-like algorithms can be applied other than to treat
  all complex actions independently.}. Besides, the deterministic
approach of optimizing over regions of the parameter space that
UCB-like algorithms follow \cite{DaniHayKak08,AbbPalCsa11:linbandits}
is arguably harder to apply in practice, as opposed to optimizing over
the action space given a sampled parameter in Thompson sampling --
often an efficient polynomial-time routine like a sort. The Bayesian
view that motivates Thompson sampling also allows us to use efficient
numerical algorithms such as particle filtering
\cite{RisticParticleBook04,DoucetSMC01} to approximate complicated
posterior distributions in practice. \\
%Obviously, we can always enrich
%the set of action to include all possible complex actions, the
%difficulty is that this will result many times in an exponential
%blow-up, and much of the observed information is ignored, which will
%naturally lead to a prolonged exploration period and a fairly large
%regret.

Our main analytical result is a general regret bound for Thompson
sampling in complex bandit settings. No specific structure is imposed
on the initial (fictitious) prior, except that it be discretely
supported and put nonzero mass on the true model. The bound for this
general setting scales logarithmically with time\footnote{More
  precisely, we obtain a bound of the form $\mathsf{B} + \mathsf{C}
  \log T$, in which $\mathsf{C}$ is a non-trivial preconstant that
  captures precisely the structure of correlations among actions and
  thus is often better than the decoupled
  sum-of-inverse-KL-divergences bounds seen in literature
  \cite{LaiRob:1985}. The additive constant (wrt time) $\mathsf{B}$,
  though potentially large and depending on the total number of
  complex actions, appears to be merely an artifact of our proof
  technique tailored towards extracting the time scaling
  $\mathsf{C}$. This is borne out, for instance, from numerical
  experiments. We remark that such additive constants, in fact, often
  appear in regret analyses of basic Thompson sampling
  \cite{KauKorMun12:thompson,AgrawalG}.}, as is well-known. But more
interestingly, the preconstant for this logarithmic scaling can be
explicitly characterized in terms of the bandit's KL divergence
geometry and represents the {\em information complexity} of the bandit
problem. The standard MAB imposes no structure among the actions, thus
its information complexity simply becomes a sum of terms, one for each
separate action. However, in a complex bandit setting, rewards are
often more informative about other parameters of the model, in which
case the bound reflects the resulting coupling across complex actions. \\

Recent work has shown the regret-optimality of Thompson sampling for
the basic MAB \cite{AgrawalG,KauKorMun12:thompson}, and has even
provided regret bounds for a specific complex bandit setting -- the
linear bandit case where the reward is a linear function of the
actions \cite{AgrGoy13:contextual}. However, the analysis of complex
bandits in general poses challenges that cannot be overcome using the
specialized techniques in these works. Indeed, these existing analyses
rely crucially on the conjugacy of the prior and posterior
distributions -- either independent Beta or exponential family
distributions for basic MAB or standard normal distributions for
linear bandits. These methods break down when analyzing the evolution
of complicated posterior distributions which often lack even a closed
form expression. \\

In contrast to existing regret analyses, we develop a novel proof
technique based on looking at the form of the Bayes posterior. This
allows us to track the posterior distributions that result from
general action and feedback sets, and to express the concentration of
the posterior as a constrained optimization problem in path space. It
is rather surprising that, with almost no specific structural
assumptions on the prior, our technique yields a regret bound that
reduces to Lai and Robbins' classic lower bound for standard MAB, and
also gives non-trivial and improved regret scalings for complex
bandits. In this vein, our results represent a generalization of
existing performance results for Thompson sampling. \\

We have complemented our theoretical findings with numerical studies
of Thompson sampling. The algorithm is implemented using a simple
particle filter \cite{RisticParticleBook04} to maintain and sample
from posterior distributions. We evaluated the performance of the
algorithm on two complex bandit scenarios -- subset selection from a
bandit and job scheduling. \\

\noindent{\bf Related Work:} Bayesian ideas for the multi-armed bandit
date back nearly 80 years ago to the work of W. R. Thompson
\cite{Thompson}, who introduced an elegant algorithm based on
posterior sampling. However, there has been relatively meager work on
using Thompson sampling in the control setup. A notable exception is
\cite{OrtegaBraun10} that develops general Bayesian control rules and
demonstrates them for classic bandits and Markov decision processes
(i.e., reinforcement learning). On the empirical side, a few recent
works have demonstrated the success of Thompson sampling
\cite{Scott,OL11}. Recent work has shown frequentist-style regret
bounds for Thompson sampling in the standard bandit model
\cite{AgrawalG,KauKorMun12:thompson,KorKauMun13:tsexpfam}, and Bayes
risk bounds in the purely Bayesian setting
\cite{OsbRusRoy13:postsamp}. Our work differs from this literature in
that we go beyond simple, decoupled actions/observations -- we focus
on the performance of Thompson setting in a general action/feedback
model, and show novel frequentist regret bounds that account for the
structure of complex actions. \\

% discuss dani et al. paper and results
%FIXME: Can replace/add improved version of dani: Abbasi-Yadkori et al,
Regarding bandit problems with actions/rewards more complex than the
basic MAB, a line of work that deserves particular mention is that of
linear bandit optimization
\cite{Auer03,DaniHayKak08,AbbPalCsa11:linbandits}. In this setting,
actions are identified with decision vectors in a Euclidean space, and
the obtained rewards are random linear functions of actions, drawn
from an unknown distribution. Here, we typically see regret bounds for
generalizations of the UCB algorithm that show polylogarithmic regret
for this setting. However, the methods and bounds are highly tailored
to the specific linear feedback structure and do not carry over to
other kinds of feedback. 
%\vspace{-0.5cm}

% In the adversarial setting, there is work on regret minimization with
% complex actions and linear loss functions by Kalai and Vempala
% \cite{KalaiV05} and by Cesa-Bianchi and Lugosi \cite{Cesa-BianchiL09},
% essentially showing vanishing per-round regret.

\section{Setup and Notation}
We consider a general stochastic model $X_1, X_2, ...$ of independent
and identically distributed random variables {\corr living in a space
$\mathcal{X}$ (e.g., $\mathcal{X} = \mathbb{R}^{\mathsf{N}}$ if
there is an underlying $\mathsf{N}$-armed basic bandit -- we will
revisit this in detail in Section \ref{subsec:application}).} The
distribution of each $X_t$ is parametrized by $\theta^* \in \Theta$,
where $\Theta$ denotes the parameter space. At each time $t$, an
action $A_t$ is played from an action set $\mathcal{A}$, following
which the decision maker obtains a stochastic observation $Y_t =
f(X_t,A_t) \in \mathcal{Y}$, the observation space, and a scalar
reward $g(f(X_t,A_t))$. Here, $f$ and $g$ are general fixed functions,
and we will often denote $g \circ f$ by the function\footnote{e.g.,
  when $A_t$ is a subset of basic arms, $h(X_t,A_t)$ could denote the
  maximum reward from the subset of coordinates of $X_t$ corresponding
  to $A_t$.} $h$. We denote by $l(y;a,\theta)$ the likelihood of
observing $y$ upon playing action $a$ when the distribution parameter
is $\theta$, i.e.,\footnote{{\corr Finiteness of $\mathcal{Y}$ is
    implicitly assumed for the sake of clarity. In general, when
    $\mathcal{Y}$ is a Borel subset of $\mathbb{R}^\mathsf{N}$,
    $l(\cdot;a,\theta)$ will be the corresponding
    $\mathsf{N}$-dimensional density, etc.}} $l(y;a,\theta) \bydef
\mathbb{P}_\theta[f(X_1,a) = y]$. \\

For $\theta \in \Theta$, let $a^*(\theta)$ be an action that yields
the highest expected reward for a model with parameter $\theta$, i.e.,
$a^*(\theta) \bydef \arg\max_{a \in \mathcal{A}} \mathbb{E}_\theta[
  h(X_1,a)]$.\footnote{The absence of a
  subscript is to be understood as working with the parameter
  $\theta^*$.}. We use $e^{(j)}$ to denote the $j$-th unit vector in
finite-dimensional Euclidean space. \\

The goal is to play an action at each time $t$ to minimize the
(expected) \emph{regret} over $T$ rounds: $R_T \bydef \sum_{t=1}^T
h(X_t,a^*(\theta^*)) - h(X_t,A_t)$, or alternatively, the number of
plays of suboptimal actions\footnote{We refer to the latter objective
  as regret since, under bounded rewards, both the objectives scale
  similarly with the problem size.}: $\sum_{t=1}^T
\mathbf{1}\{A_t \neq a^*\}$. \\

{\corr {\em Remark:} Our main result holds in a more abstract
  stochastic bandit model $(\Theta, \mathcal{Y}, \mathcal{A}, l,
  \hat{h})$ without the need for the underlying ``basic arms''
  $\{X_i\}_i$ and the basic ambient space $\mathcal{X}$. In this case
  we require $l(y;a,\theta) \bydef \mathbb{P}_\theta[Y_1 = y \vert A_1
    = a]$, $\hat{h}:\mathcal{Y} \to \mathbb{R}$ (the reward function),
  $a^*(\theta) \bydef \arg \max_{a \in \mathcal{A}}
  \mathbb{E}_\theta[\hat{h}(Y_1) \vert A_1 = a]$, and the regret $R_T
  \bydef T \hat{h}(Y_0) - \sum_{t=1}^T \hat{h}(Y_t)$ where
  $\mathbb{P}[Y_0 = \cdot] = l(\cdot;a^*(\theta^*),\theta^*)$.} \\

For each action $a \in \mathcal{A}$, define $S_a \bydef \{\theta \in
{\Theta}: a^*(\theta) = a \}$ to be the \emph{decision region} of $a$,
i.e., the set of models in $\Theta$ whose optimal action is
$a$. Within $S_a$, let $S_a'$ be the models that \emph{exactly match}
$\theta^*$ in the sense of the marginal distribution of action $a^*$,
i.e., $S_a' \bydef \{\theta \in S_a: D(\theta^*_{a^*}||\theta_{a^*}) =
0 \}$. Let $S_a''$ be the remaining models in $S_a$.

% The Thompson Sampling algorithm (Algorithm \ref{alg:ts}) maintains a
% fictitious prior distribution over the parameter space $\Theta$. It
% plays an action $A_t$ by first drawing a parameter from $\Theta$
% according to its posterior probability, given the prior and
% observations from past plays, and then playing the best action for the
% sampled parameter. 

\begin{algorithm}[tbp]
  \caption{Thompson Sampling}
  \label{alg:ts}
  {\bf Input:} Parameter space $\Theta$, action space $\mathcal{A}$, output
  space $\mathcal{Y}$, likelihood $l(y;a,\theta)$. 

  {\bf Parameter:} Distribution $\pi$ over $\Theta$.
  
  {\bf Initialization:}  Set $\pi_0 = \pi$. \\

  {\bf for} each $t = 1, 2, \ldots$
  \begin{enumerate}
  \item Draw $\theta_t \in \Theta$ according to the distribution
    $\pi_{t-1}$. 

  \item Play $A_t = a^*(\theta_t) \bydef \arg\max_{a \in \mathcal{A}}
    \mathbb{E}_{\theta_t}[ h(X_1,a)]$.

  \item Observe $Y_t = f(X_t,A_t)$.
  
  \item (Posterior Update) Set the distribution $\pi_t$ over $\Theta$ to 
    \[\forall S \subseteq \Theta: \quad \pi_t(S) = \frac{\int_S l(Y_t;A_t,\theta) \pi_{t-1}(d\theta)}{\int_\Theta l(Y_t;A_t,\theta) \pi_{t-1}(d\theta)}. \]
  \end{enumerate}
  {\bf end for}
\end{algorithm}

\section{Regret Performance: Overview}
\label{sec:heuristic}
We propose using Thompson sampling (Algorithm \ref{alg:ts}) to play
actions in the general bandit model. Before formally stating the
regret bound, we present an intuitive explanation of how Thompson
sampling learn to play good actions in a general setup where
observations, parameters and actions are related via a general
likelihood. To this end, let us assume that there are finitely many
actions $\mathcal{A}$. Let us also index the actions in $\mathcal{A}$
as $\{1,2, \ldots, |\mathcal{A}|\}$, with the index $|\mathcal{A}|$
denoting the optimal action $a^*$ (we will require this indexing later
when we associate each coordinate of $|\mathcal{A}|$-dimensional space
with its respective action). Denote by $D(\theta^*_a || \theta_a)$ the
{\em marginal} Kullback-Leibler divergence between the output
distributions of parameters $\theta^*$ and $\theta$ upon playing
action $a$, i.e., between the distributions $\{l(y;a,\theta^*): y \in
\mathcal{Y}\}$ and $\{l(y;a,\theta): y \in \mathcal{Y}\}$. \\

When action $A_t$ is played at time $t$, the prior density gets
updated to the posterior as
% \[\pi_t(d\theta) \propto l(Y_t;A_t,\theta) \pi_{t-1}(d \theta) \propto
% \frac{l(Y_t;A_t,\theta)}{l(Y_t;A_t,\theta^*)} \pi_{t-1}(d \theta) =
% \exp\left(-\log \frac{l(Y_t;A_t,\theta^*)}{l(Y_t;A_t,\theta)} \right)
% \pi_{t-1}(d \theta).\] 
$\pi_t(d\theta) \propto \exp\left(-\log
  \frac{l(Y_t;A_t,\theta^*)}{l(Y_t;A_t,\theta)} \right) \pi_{t-1}(d
\theta)$. Observe that the conditional expectation of the
``instantaneous'' log-likelihood ratio $\log
\frac{l(Y_t;A_t,\theta^*)}{l(Y_t;A_t,\theta)}$, is simply the
appropriate marginal KL divergence, i.e.,
\[\expect{\log\frac{l(Y_t;A_t,\theta^*)}{l(Y_t;A_t,\theta)} \given A_t}
= \sum_{a \in \mathcal{A}}\mathbf{1}\{A_t =
a\}D(\theta^*_a||\theta_a).\] Hence, up to a coarse approximation,
\[\log \frac{l(Y_t;A_t,\theta^*)}{l(Y_t;A_t,\theta)} \approx \sum_{a
  \in \mathcal{A}}\mathbf{1}\{A_t = a\}D(\theta^*_a||\theta_a),\] with
which we can write
\begin{equation}
\label{eqn:approximation}
\pi_t(d\theta) \appropto \exp\left(- \sum_{a \in \mathcal{A}} N_t(a)
  D(\theta^*_a||\theta_a)\right) \pi_0(d\theta),
\end{equation} 
with $N_t(a) \bydef \sum_{i=1}^t \mathbf{1}\{A_i = a\}$ denoting the
play count of $a$. The quantity in the exponent can be interpreted as
a ``loss'' suffered by the model $\theta$ up to time $t$, and each time
an action $a$ is played, $\theta$ incurs an additional loss of
essentially the marginal KL divergence $D(\theta^*_a||\theta_a)$. \\

Upon closer inspection, the posterior approximation
(\ref{eqn:approximation}) yields detailed insights into the dynamics
of posterior-based sampling. First, since $\exp\left(- \sum_{a \in
    \mathcal{A}} N_t(a) D(\theta^*_a||\theta_a)\right) \leq 1$, the
true model $\theta^*$ always retains a significant share of posterior
mass: $\pi_t(d\theta^*) \gtrsim \frac{\exp(0) \; \pi_0(d
  \theta^*)}{\int_\Theta 1 \;\pi_0(d \theta)} = \pi_0(d\theta^*)$.
This means that Thompson sampling samples $\theta^*$, and hence plays
$a^*$, with at least a constant probability each time, so that
$N_t(a^*) = \Omega(t)$. \\

Suppose we can show that each model in any $S_a''$, $a \neq a^*$, is
such that $D(\theta^*_{a^*}||\theta_{a^*})$ is bounded strictly away
from $0$ with a gap of $\xi > 0$. Then, our preceding calculation
immediately tells us that any such model is sampled at time $t$ with a
probability exponentially decaying in $t$: $\pi_t(d\theta) \lesssim
\frac{e^{-\xi \Omega(t)}\pi_0(d\theta) }{\pi_0 (d\theta^*)}$; the
regret from such $S_a''$-sampling is \emph{negligible}. On the other
hand, how much does the algorithm have to work to make models in
$S_a'$, $a \neq a^*$ suffer \emph{large} ($\approx \log T$) losses and
thus rid them of significant posterior probability?\footnote{Note:
  Plays of $a^*$ do \emph{not} help increase the losses of these
  models.} \\

A model $\theta \in S_a'$ suffers loss whenever the algorithm plays an
action $a$ for which $D(\theta^*_a || \theta_a) > 0$. Hence, several
actions can help in making a bad model (or set of models) suffer large
enough loss. Imagine that we track the play count vector $N_t \bydef
(N_t(a))_{a \in \mathcal{A}}$ in the integer lattice from $t = 0$
through $t = T$, from its initial value $N_0 = (0,\ldots,0)$. There
comes a first time $\tau_1$ when some action $a_1 \neq a^*$ is
eliminated (i.e., when all its models' losses exceed $\log T$). The
argument of the preceding paragraph indicates that the play count of
$a_1$ will stay fixed at $N_{\tau_1}(a_1)$ for the remainder of the
horizon up to $T$. Moving on, there arrives a time $\tau_2 \geq
\tau_1$ when another action $a_2 \notin \{a*,a_1\}$ is eliminated, at
which point its play count ceases to increase beyond
$N_{\tau_2}(a_2)$, and so on. \\

To sum up: Continuing until all actions $a \neq a^*$ (i.e., the
regions $S_a'$) are eliminated, we have a path-based bound for the
total number of times suboptimal actions can be played. If we let $z_k
= N_{\tau_k}$, i.e., the play counts of all actions at time $\tau_k$,
then for all $i \geq k$ we must have the constraint $z_i(a_k) =
z_k(a_k)$ as plays of $a_k$ do not occur after time
$\tau_k$. Moreover, $\min_{\theta \in S_{a_k}'} \ip{z_k}{D_\theta}
\approx \log T$: action $a_k$ is eliminated precisely at time
$\tau_k$. A bound on the total number of bad plays thus becomes
% \footnote{$a \succeq b$ denotes that the
%   vector $a$ is component-wise at least $b$.}
  \begin{equation}
    \label{eqn:intuitbd}
    \begin{aligned}
      &\max && ||z_k||_1\\
      &\text{ s.t.}
      && \exists \; \mbox{play count sequence} \; \{z_k\}, \\
      &&& \exists \; \mbox{suboptimal action sequence} \; \{a_k\}, \\
      &&& z_i(a_k) = z_k(a_k), i \geq k, \\
      &&& \min_{\theta \in S_{a_k}'} \quad \ip{z_k}{D_\theta} \approx
      \log T, \quad \forall k.
    \end{aligned}
  \end{equation}
  The final constraint above ensures that an action $a_k$ is
  eliminated at time $\tau_k$, and the penultimate constraint encodes
  the fact that the eliminated action $a_k$ is not played after time
  $\tau_k$. The bound not only depends on $\log T$ but also on the
  KL-divergence geometry of the bandit, i.e., the marginal divergences
  $D(\theta^*_a||\theta_a)$. Notice that no specific form for the
  prior or posterior was assumed to derive the bound, save the fact
  that $\pi_0(d\theta^*) \gtrsim 0$, i.e., that the prior puts
  ``enough'' mass on the truth. \\
  
  In fact, all our approximate calculations leading up to the bound
  (\ref{eqn:intuitbd}) hold rigorously -- Theorem \ref{thm:tsregret},
  to follow, states that under reasonable conditions on the prior, the
  number of suboptimal plays/regret scales as (\ref{eqn:intuitbd})
  with high probability. We will also see that the general bound
  (\ref{eqn:intuitbd}) is non-trivial in that (a) for the standard
  multi-armed bandit, it gives essentially the optimum known regret
  scaling, and (b) for a family of complex bandit problems, it can be
  significantly less than the one obtained by treating all actions
  separately.

\section{Regret Performance: Formal Results}
Our main result is a high-probability large-horizon regret
bound\footnote{More precisely, we bound the number of plays of
  suboptimal actions. A bound on the standard regret can also be
  obtained easily from this, via a self-normalizing concentration
  inequality we use in this paper (Appendix
  \ref{app:tsregret}). However, we avoid stating this in the interest
  of minimizing clutter in the presentation, since there will be
  additional $O(\sqrt{\log T})$ terms in the bound on standard
  regret.} for Thompson sampling. The bound holds under the following
mild assumptions about the parameter space $\Theta$, action space
$|\mathcal{A}|$, observation space $|\mathcal{Y}|$, and the fictitious
prior $\pi$.

\begin{assumption}[Finitely many actions, observations]
\label{ass:finite}
$|\mathcal{A}|, |\mathcal{Y}| < \infty$.
\end{assumption}

%% \begin{assumption}[Bounded rewards]
%% \label{ass:bounded}
%% $\forall x \in \mathcal{X} \; \forall a \in \mathcal{A} \; \; h(x,a)
%% \in [0,1]$. (In general, any finite upper bound on the absolute value
%% of the reward function suffices.)
%% \end{assumption}

\begin{assumption}[Finite prior, ``Grain of truth'']
\label{ass:discreteprior}
The prior distribution $\pi$ is supported over a finitely many
particles: $\Theta =\{\theta_1, \ldots, \theta_L\}$, with $L \in
\mathbb{N}$, $\theta^* \in \Theta$ and $\pi(\theta^*) >
0$. Furthermore, there exists $\Gamma \in (0,1/2)$ such that $ \Gamma
\leq l(y;a,\theta) \leq 1-\Gamma$ $\forall \theta \in {\Theta}, a \in
\mathcal{A}, y \in \mathcal{Y}$.
\end{assumption} 

{\em Remark:} We emphasize that the finiteness assumption on the prior
is made primarily for technical tractability, without compromising the
key learning dynamics of Thompson sampling perform well. In a sense, a
continuous prior can be approximated by a fine enough discrete prior
without affecting the geometric structure of the parameter space. The
core ideas driving our analysis explain why Thompson sampling provably
performs well in very general action-observation settings, and, we
believe, can be made general enough to handle even continuous
priors/posteriors. However, the issues here are primarily
measure-theoretic: much finer control will be required to bound and
track posterior probabilities in the latter case, perhaps requiring
the design of {\em adaptive} neighbourhoods of $\theta^*$ with
sufficiently large posterior probabiity that depend on the evolving
history of the algorithm. It is not clear to us how such regions may
be constructed for obtaining regret guarantees in the case of
continuous priors. We thus defer this highly nontrivial task to future
work.

\begin{assumption}[Unique best action]
\label{ass:uba}
The optimal action in the sense of expected reward is
unique\footnote{This assumption is made only for the sake of
  notational ease, and does not affect the paper's results in any
  significant manner.}, i.e., $\mathbb{E}[ h(X_1,a^*) ] > \max_{a \in
  \mathcal{A}, a \neq a^*} \mathbb{E}[ h(X_1,a) ]$.
\end{assumption}

For each action $a \in \mathcal{A}$, let $S_a \bydef \{\theta \in
{\Theta}: a^*(\theta) = a \}$ be the set of parameters for which
playing $a$ is optimal. For any suboptimal action $a \neq a^*$, let
$S_a' \bydef \{\theta \in S_a: D(\theta^*_{a^*}||\theta_{a^*}) = 0
\}$, $S_a'' \bydef S_a \setminus S_a'$, and $\xi \bydef \inf_{\theta
  \in S_a''} D(\theta^*_{a^*}||\theta_{a^*})$. \\

We now state the regret bound for Thompson sampling for general
stochastic bandits. The bound is a rigorous version of the path-based
bound presented earlier, in Section \ref{sec:heuristic}.
\begin{thm}[General Regret Bound for Thompson Sampling]
\label{thm:tsregret}
Under Assumptions \ref{ass:finite}-\ref{ass:uba}, the following holds
for the Thompson Sampling algorithm. For $\delta, \epsilon \in (0,1)$,
there exists $T^\star \geq 0$ such that for all $T \geq T^\star$, with
probability at least $1-\delta$, $\sum_{a \neq a^*}N_T(a) \leq
\mathsf{B} + \mathsf{C}(\log T)$, where $\mathsf{B} \equiv
\mathsf{B}(\delta,\epsilon,\mathcal{A},\mathcal{Y},\Theta,\pi)$ is a
problem-dependent constant that does not depend on $T$, and
\footnote{$\mathsf{C}(\log T) \equiv
  \mathsf{C}(T,\delta,\epsilon,\mathcal{A},\mathcal{Y},\Theta,\pi)$ in
  general, but we suppress the dependence on the problem parameters
  $\delta,\epsilon,\mathcal{A},\mathcal{Y},\Theta,\pi$ as we are
  chiefly concerned with the time scaling.}:
  \begin{equation}
    \label{eqn:tsregret}
    \begin{aligned}
      &&& \mathsf{C}(\log T) \bydef\\ 
      &\max && \sum_{k=1}^{|\mathcal{A}|-1} z_k(a_k)\\
      &\text{ s.t.}
      && z_k \in \mathbb{Z}_+^{|\mathcal{A}|-1} \times \{0\}, a_k \in \mathcal{A} \setminus \{a^*\}, k < |\mathcal{A}|, \\
      &&& z_i \succeq z_k, z_i(a_k) = z_k(a_k), i \geq k, \\
      &&& \forall 1 \leq j,k \leq |\mathcal{A}|-1:\\
      &&& \quad \min_{\theta \in S_{a_k}'} \quad \ip{z_k}{D_\theta} \geq \frac{1+\epsilon}{1-\epsilon}\log T, \\
      &&& \quad \min_{\theta \in S_{a_k}'} \quad \ip{z_k -
        e^{(j)}}{D_\theta} < \frac{1+\epsilon}{1-\epsilon}\log T.
    \end{aligned}
  \end{equation}
\end{thm}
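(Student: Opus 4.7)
The plan is to follow the heuristic outline of Section \ref{sec:heuristic} while making each approximation rigorous through martingale concentration, and then to show that the trajectory of play counts traced by the algorithm is almost surely feasible for the optimization defining $\mathsf{C}(\log T)$. I will work on a single \emph{good event} of probability at least $1-\delta$ on which several concentration estimates hold simultaneously, obtained by union-bounding over the $L$ particles in $\Theta$ (finite by Assumption~\ref{ass:discreteprior}) and applying a self-normalized (peeling) bound to get uniformity in $t$.

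First, I would write the posterior exactly as
\[ \pi_t(\theta) \propto \exp\!\left(- \sum_{s=1}^t \log \frac{l(Y_s;A_s,\theta^*)}{l(Y_s;A_s,\theta)}\right) \pi_0(\theta). \]
For each $\theta$, the process $M_t(\theta) \bydef \sum_{s=1}^t \big(\log \tfrac{l(Y_s;A_s,\theta^*)}{l(Y_s;A_s,\theta)} - D(\theta^*_{A_s}\|\theta_{A_s})\big)$ is a sum of bounded martingale differences (boundedness follows from the $\Gamma$-assumption in Assumption~\ref{ass:discreteprior}), so a self-normalized concentration inequality in the style of \cite{AbbPalCsa11:linbandits} gives, on the good event, $|M_t(\theta)| \leq \epsilon\,\ip{N_t}{D_\theta} + C_\Theta$ for every $t$ and every $\theta$. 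This yields the rigorous two-sided sandwich
\[ \exp\!\big(-(1{+}\epsilon)\ip{N_t}{D_\theta} - C_\Theta\big)\pi_0(\theta) \;\lesssim\; \pi_t(\theta) \;\lesssim\; \exp\!\big(-(1{-}\epsilon)\ip{N_t}{D_\theta} + C_\Theta\big)\pi_0(\theta). \]
From the lower bound at $\theta = \theta^*$, the sampling probability of $\theta^*$ is at each step bounded below by an absolute constant $p^* > 0$, so a standard Chernoff argument on the adapted Bernoullis $\mathbf{1}\{\theta_t = \theta^*\}$ gives $N_t(a^*) \geq p^* t/2$ for all $t \geq t_0$. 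Combined with $\xi \bydef \inf_{\theta \in S_a''} D(\theta^*_{a^*}\|\theta_{a^*}) > 0$, the upper bound then forces $\sum_t \pi_t(S_a'')$ to be summable, so total plays attributable to draws from any $S_a''$ contribute only an $O(1)$ term absorbed into $\mathsf{B}$.

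The main work is to control plays caused by draws from $\bigcup_{a\neq a^*} S_a'$. Call a suboptimal action $a$ \emph{eliminated} at time $t$ if $\min_{\theta \in S_a'} \ip{N_t}{D_\theta} \geq \tfrac{1+\epsilon}{1-\epsilon}\log T$; by the upper sandwich, $\pi_t(S_a') \lesssim T^{-1}$ after elimination, so subsequent plays of $a$ again contribute only $O(|\mathcal{A}|)$ to $\mathsf{B}$. Let $\tau_1 \leq \tau_2 \leq \cdots$ denote the successive first times at which a new suboptimal action $a_k$ gets eliminated, and set $z_k \bydef N_{\tau_k}$ (zeroing the $a^*$-coordinate as in the statement). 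By construction, $(z_k, a_k)$ is feasible for the optimization in (\ref{eqn:tsregret}): the freeze constraint $z_i(a_k)=z_k(a_k)$ for $i \geq k$ holds because $a_k$ is not played after $\tau_k$; the lower bound on $\min_{\theta \in S_{a_k}'}\ip{z_k}{D_\theta}$ is precisely the elimination criterion; and the strict upper bound on $\min_{\theta \in S_{a_k}'}\ip{z_k - e^{(j)}}{D_\theta}$ holds because $a_k$ was not eliminated at time $\tau_k - 1$, whose play count differs from $N_{\tau_k}$ by exactly one unit coordinate $e^{(j)}$. The total number of suboptimal plays is therefore $\sum_{a\neq a^*} N_T(a) = \sum_{k=1}^{|\mathcal{A}|-1} z_k(a_k)$ plus the previously accounted $O(1)$ corrections, and is at most $\mathsf{C}(\log T)$ by maximizing over feasible sequences.

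The hardest technical step is the uniform-in-$t$ self-normalized control of $M_t(\theta)$ with multiplicative slack $\epsilon$ and additive slack $C_\Theta$ that do not blow up with the horizon; this is what forces the use of a peeling argument over dyadic intervals, and it dictates the precise shape of the constants $\frac{1+\epsilon}{1-\epsilon}$ appearing in (\ref{eqn:tsregret}). A secondary subtlety is that $\pi_t(S_a')$ is not monotone in $t$ because plays of actions $a'$ with $D(\theta^*_{a'}\|\theta_{a'})=0$ for some $\theta \in S_a'$ can transiently restore mass; this is handled by defining the elimination times $\tau_k$ in order and noting that once $S_a'$ is eliminated, the posterior mass it can reacquire from later plays is still controlled by the same sandwich, since any such play increments $\ip{N_t}{D_\theta}$ only in coordinates that leave the minimum over $S_a'$ above threshold by construction of the feasibility constraints.
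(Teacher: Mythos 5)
Your proposal follows essentially the same route as the paper: write the posterior as exponential weights, use a self-normalized concentration bound to get a two-sided sandwich $e^{-(1\pm\epsilon)\ip{N_t}{D_\theta}\mp C}$ on the weights (the paper's Lemma on $W_t(\theta)$ does exactly this, via uniform control of the per-action empirical observation distributions in a pre-drawn ``$Q$-matrix'' probability space), deduce $\pi_t(\theta^*)\geq p^*$, kill $S_a''$ using $\xi>0$, and then run an elimination-time argument for $S_a'$ whose trajectory is fed into the optimization defining $\mathsf{C}(\log T)$. The only cosmetic difference in the $S_a''$ step is that the paper avoids your Chernoff bound on $N_t(a^*)$ by directly iterating $\expect{e^{-\xi N_t(a^*)}\given G}\leq(p^*e^{-\xi}+1-p^*)^t$ through conditional expectations, which sidesteps the union bound over $t$; both work.

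There is one step where your write-up is wrong as stated and needs the paper's repair. You justify the freeze constraint $z_i(a_k)=z_k(a_k)$ for $i\geq k$ by asserting that ``$a_k$ is not played after $\tau_k$.'' That is false: after elimination, $\pi_t(S_{a_k}')$ is only $O(1/T)$, not zero, and draws from $S_{a_k}''$ also trigger plays of $a_k$, so the literal play-count vectors $N_{\tau_i}$ are \emph{not} feasible for (\ref{eqn:tsregret}). The paper fixes this by (i) running the whole elimination bookkeeping on the primed counts $N_t'(a)$ (draws from $S'$-regions only), with the $S''$-attributable plays already exiled to $\mathsf{B}$; (ii) defining the elimination criterion itself with coordinates of already-eliminated actions explicitly frozen at their values $N_{\tau_m}'(a_m)$; and (iii) setting $z_k(a)=N_{\tau(a)}'(a)$ for $\tau(a)\leq\tau_k$ and $z_k(a)=N_{\tau_k}'(a)$ otherwise, so that the monotonicity and freeze constraints hold by construction rather than by a (false) claim about the algorithm's behaviour. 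The residual post-elimination draws from $S_{a_k}'$ are then bounded by $O(|\mathcal{A}|/(\delta\pi(\theta^*)))$ via Markov on $\sum_t\expect{\pi_t(S_{a_k}')\given G}\leq 1/\pi(\theta^*)$ and absorbed into $\mathsf{B}$. You gesture at the $O(1)$ corrections, so the idea is recoverable, but as written the feasibility claim--which is the crux of converting the trajectory into the bound $\mathsf{C}(\log T)$--does not hold for the $z_k$ you defined.
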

The proof is in Appendix \ref{app:tsregret} of the supplementary
material, and uses a {\corr recently developed} self-normalized
concentration inequality \cite{AbbPalCsa11:linbandits} to help track
the {\corr sample path} evolution of the posterior distribution in its
general form. The power of Theorem \ref{thm:tsregret} lies in the fact
that it accounts for coupling of information across complex actions
and give improved structural constants for the regret scaling than the
standard decoupled case, as we show\footnote{We remark that though the
  non-scaling (with $T$) additive constant $\mathsf{B}$ might appear
  large, we believe it is an artifact of our proof technique tailored
  to extract the time scaling of the regret. Indeed, numerical results
  show practically no additive factor behaviour.} in Corollaries
\ref{cor:corr2} and \ref{cor:corr3}. {\corr We also prove Proposition
  \ref{prop:atleastLmain}, which explicitly quantifies the improvement
  over the naive regret scaling for general complex bandit problems as
  a function of marginal KL-divergence separation in the parameter
  space $\Theta$.}

% An immediate corollary of the Theorem is a standard Lai-Robbins type
% \cite{LaiRob:1985} bound. Let $\delta_a \bydef \min_{\theta \in S_a'}
% D(\theta^*_a||\theta_a)$ for $a \neq a^*$.

% \begin{corollary}[Lai-Robbins type bound, general bandits]
% \label{cor:corr1}
% Under Assumptions \ref{ass:finite}-\ref{ass:uba}, the following holds
% for the Thompson Sampling algorithm. For $\delta, \epsilon \in (0,1)$,
% there exists $T^\star \geq 0$ such that for all $T \geq T^\star$, with
% probability at least $1-\delta$, $\sum_{a \neq a^*}N_T(a) \leq
% \mathsf{B}_1 + \frac{1+\epsilon}{1-\epsilon}\sum_{a \in \mathcal{A}, a
%   \neq a^*} \frac{1}{\delta_a} \log T$, where $\mathsf{B}_1 \equiv
% \mathsf{B}_1(\delta,\epsilon,\mathcal{A},\mathcal{Y},\Theta,\pi)$ is a
% problem-dependent constant that does not depend on $T$.
% \end{corollary}
% % The proof is deferred to Appendix \ref{app:corr1}.

\subsection{Playing Subsets of Bandit Arms and Observing ``Full
  Information''}
\label{subsec:application}
% The bound of Theorem \ref{thm:tsregret} can be used to calculate
% non-trivial bounds for general complex bandit problems. We consider a
% family of subset selection complex bandit problems, and show a regret
% bound that grows only linearly in the size of the basic bandit whose
% subsets are the complex actions.
Let us take a standard $N$-armed Bernoulli bandit with arm parameters
$\mu_1 \leq \mu_2 \leq \cdots \leq \mu_N$. Suppose the (complex)
actions are all size $M$ subsets of the $N$ arms. Following the choice
of a subset, we get to observe the rewards of {\em all} $M$ chosen
arms and receive some bounded reward of the chosen arms (thus,
$\mathcal{Y} = \{0,1\}^M$, $\mathcal{A} = \{S \subset [N]: |S| = M\}$,
$f(\cdot,A)$ is simply the projection onto coordinates of $A \in
\mathcal{A}$, and $g$ is the identity function on $\mathbb{R}^M$). \\

A natural finite prior for this problem can be obtained by
discretizing each of the $N$ basic dimensions and putting uniform mass
over all points: $\Theta = \left\{\beta, 2 \beta, \ldots \left(\lfloor
\frac{1}{\beta}\rfloor - 1\right) \beta \right\}^N$, $\beta \in
(0,1)$, and $\pi(\theta) = \frac{1}{|\Theta|}$ $\forall \theta \in
\Theta$. We can then show, using Theorem \ref{thm:tsregret}, that
\begin{corollary}[Regret for playing subsets of basic arms, Full feedback]
\label{cor:corr2}
Suppose $\mu \equiv (\mu_1, \mu_2, \ldots, \mu_N) \in \Theta$ and
$\mu_{N-M} < \mu_{N-M+1}$. Then, the following holds for the Thompson
sampling algorithm for $\mathcal{Y}$, $\mathcal{A}$, $f$, $g$,
$\Theta$ and $\pi$ as above. For $\delta, \epsilon \in (0,1)$, there
exists $T^\star \geq 0$ such that for all $T \geq T^\star$, with
probability at least $1-\delta$, $\sum_{a \neq a^*}N_T(a) \leq
\mathsf{B}_2 +
\left(\frac{1+\epsilon}{1-\epsilon}\right)\sum_{i=1}^{N-M}
\frac{1}{D(\mu_i || \mu_{N-M+1})} \log T$, where $\mathsf{B}_2 \equiv
\mathsf{B}_2(\delta,\epsilon,\mathcal{A},\mathcal{Y},\Theta,\pi)$ is a
problem-dependent constant that does not depend on $T$.
\end{corollary}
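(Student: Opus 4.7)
The plan is to apply Theorem \ref{thm:tsregret} and then tightly upper-bound the combinatorial LP value $\mathsf{C}(\log T)$ in (\ref{eqn:tsregret}) for this full-information subset-selection setup. Under $\mu_{N-M} < \mu_{N-M+1}$ the unique optimal action is the top-$M$ set $a^* = \{N-M+1, \ldots, N\}$, and since the feedback is the full Bernoulli vector on the chosen subset, $D(\theta^*_a \| \theta_a) = \sum_{i \in a} D(\mu_i \| \theta_i)$. The first concrete step is to characterize $S_a'$: writing any bad action as $a = (a^* \setminus J) \cup I$ with $|I|=|J|$, the constraints $\theta_{a^*} = \mu_{a^*}$ combined with ``$a$ is top-$M$ under $\theta$'' jointly force $J$ to be the prefix $\{N-M+1,\ldots,N-M+|J|\}$ of $a^*$ and leave only the bound $\theta_i \geq \mu_{N-M+|J|}$ for $i \in I$. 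A coordinate-wise minimization then yields
\[
\min_{\theta \in S_a'} \langle z, D_\theta \rangle \;=\; \sum_{i \in I} n_i(z)\, D\bigl(\mu_i \,\big\|\, \mu_{N-M+|J|}\bigr),
\]
where $n_i(z) := \sum_{a' \ni i} z(a')$ is the pull count of bad arm $i \in \{1,\ldots,N-M\}$, up to an $O(\beta)$ grid-discretization error absorbable into the $\tfrac{1+\epsilon}{1-\epsilon}$ slack by shrinking the grid spacing $\beta$.

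Next I would upper-bound $\mathsf{C}(\log T)$ by combining the trivial counting inequality $\sum_{a \neq a^*} z_{\text{end}}(a) \leq \sum_{i=1}^{N-M} n_i(z_{\text{end}})$ (each bad action contains at least one bad arm) with a per-arm bound $n_i(z_{\text{end}}) < \tfrac{1+\epsilon}{1-\epsilon}\log T / D(\mu_i \| \mu_{N-M+1}) + 1$. For the per-arm bound, let $k^{(i)}$ denote the step at which the \emph{last} bad action $a^{(i)} = (a^* \setminus J^{(i)}) \cup I^{(i)}$ containing arm $i$ is eliminated, so $n_i$ is frozen thereafter and $n_i(z_{\text{end}}) = n_i(z_{k^{(i)}})$. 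Instantiating the LP's decrement constraint at $k = k^{(i)}$ with $j$ equal to the canonical single-swap action $a_i := (a^* \setminus \{N-M+1\}) \cup \{i\}$, a coordinate-wise minimization over $\theta \in S'_{a^{(i)}}$ yields, in the non-trivial case $n_i \geq 1$,
\[
\min_{\theta \in S'_{a^{(i)}}} \langle z_{k^{(i)}} - e^{(a_i)}, D_\theta \rangle \;=\; (n_i - 1)\, D'_i \;+\; \sum_{j \in I^{(i)} \setminus \{i\}} n_j\, D'_j,
\]
with $D'_j := D(\mu_j \,\|\, \mu_{N-M+|J^{(i)}|}) \geq D(\mu_j \,\|\, \mu_{N-M+1})$. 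Dropping the non-negative cross sum and using $D'_i \geq D(\mu_i \| \mu_{N-M+1})$, the decrement condition $\min < \tfrac{1+\epsilon}{1-\epsilon}\log T$ gives $(n_i - 1)\, D(\mu_i \| \mu_{N-M+1}) < \tfrac{1+\epsilon}{1-\epsilon}\log T$, i.e., the per-arm bound; the case $n_i = 0$ is trivial. Summing over $i$, feeding into Theorem \ref{thm:tsregret}, and setting $\mathsf{B}_2 := \mathsf{B} + (N-M)$ completes the proof.

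The main technical obstacle is verifying the per-arm decrement argument uniformly for any last-eliminated action $a^{(i)}$, including multi-swap actions: the bound goes through because $D(\mu_j \| \mu_{N-M+k}) \geq D(\mu_j \| \mu_{N-M+1})$ holds for all prefix-swap depths $k \geq 1$. Two delicate subpoints to handle carefully are (i) the coordinate-wise minimization over the discrete grid $\Theta$ --- each $\theta_j$ is pushed to its relevant boundary ($\mu_{N-M+|J^{(i)}|}$ from above, or $\mu_j$ from below), and the resulting $O(\beta)$ slack must be absorbed into the $\tfrac{1+\epsilon}{1-\epsilon}$ factor --- and (ii) the case $n_j = 0$ for some $j \in I^{(i)} \setminus \{i\}$, in which the coordinate $\theta_j$ of the minimizer drifts to a grid extremum and contributes a large negative term that only strengthens the decrement inequality and hence does not weaken the conclusion on $n_i$.
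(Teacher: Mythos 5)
Your proof is correct and takes essentially the same route as the paper's: apply Theorem \ref{thm:tsregret}, reduce the suboptimal-play count to per-weak-arm counts over $i \in \{1,\ldots,N-M\}$, and invoke the LP's decrement constraint at the step where the last suboptimal action containing arm $i$ is eliminated, using the key fact that any $\theta \in S_a'$ with $i \in a$ has $\theta_j = \mu_j$ on $a^*$ and $\theta_i \geq \mu_{N-M+1}$, hence $D(\mu_i \| \theta_i) \geq D(\mu_i \| \mu_{N-M+1})$. The only immaterial differences are that the paper partitions actions by their least-index arm and decrements the last-eliminated action itself (taking $j = \hat{k}$), whereas you cover actions by membership of arm $i$ and decrement the canonical single-swap action; your exact characterization of $\min_{\theta \in S_a'}\langle z, D_\theta\rangle$ and the $O(\beta)$ worry are unnecessary, since only the one-sided bound obtained by dropping the nonnegative cross terms is ever used.
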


This result, proved in Appendix \ref{app:corr2} of the supplementary
material, illustrates the power of additional information from
observing several arms of a bandit at once. Even though the total
number of actions ${N \choose M}$ is at worst exponential in $M$, the
regret bound scales only as $O((N-M) \log T)$. Note also that for $M =
1$ (the standard MAB setting), the regret scaling is essentially
$\sum_{i=1}^{N-M} \frac{1}{D(\mu_i || \mu_{N-M+1})} \log T$, which is
interestingly the optimal regret scaling for standard Bernoulli
bandits obtained by specialized algorithms for decoupled bandit arms
such as KL-UCB \cite{GarCap11:klucb} and, more recently, Thompson
Sampling with the independent Beta prior \cite{KauKorMun12:thompson}.

\subsection{A General Regret Improvement Result \& Application to MAX Subset Regret}
\label{subsec:applicationmax}
Using the same setting and size-$M$ subset actions as before but
\emph{not} being able to observe all the individual arms' rewards
results in much more challenging bandit settings. Here, we consider
the case where we get to observe as the reward \emph{only} the maximum
value of $M$ chosen arms of a standard $N$-armed Bernoulli bandit. The
feedback is still aggregated across basic arms, but at the same time
very different from the full information case, e.g., observing a
reward of $0$ is very uninformative whereas a value of $1$ is highly
informative about the constituent arms. \\

We can again apply the general machinery provided by Theorem
\ref{thm:tsregret} to obtain a non-trivial regret bound for observing
the highly nonlinear MAX reward. Along the way, we derive the
following consequence of Theorem \ref{thm:tsregret}, useful in its own
right, that explicitly guarantees an improvement in regret directly
based on the Kullback-Leibler resolvability of parameters in the
parameter space -- a measure of coupling across complex actions.

\begin{proposition}[Explicit Regret Improvement Based on Marginal
  KL-divergences in $\Theta$]
  \label{prop:atleastLmain}
  Let $T$ be large enough such that $\max_{\theta \in \Theta, a \in
    \mathcal{A}} D(\theta^*_a || \theta_a) \leq
  \frac{1+\epsilon}{1-\epsilon}\log T$. Suppose $\Delta \leq \min_{a
    \neq a^*, \theta \in S_a'} D(\theta^*_a || \theta_a)$, and that
  the integer $L$ is such that for every $a \neq a^*$ and $\theta \in
  S_a'$, $|\{\hat{a} \in \mathcal{A}: \hat{a} \neq a^*,
  D(\theta^*_{\hat{a}} || \theta_{\hat{a}}) \geq \Delta \}| \geq L$,
  i.e., at least $L$ coordinates of $D_\theta$ (excluding the
  $|\mathcal{A}|$-th coordinate $a^*$) are at least $\Delta$. Then,
  $\mathsf{C}(\log T) \leq \left(\frac{|\mathcal{A}| -
    L}{\Delta}\right) \frac{2(1+\epsilon)}{1-\epsilon} \log T$.
\end{proposition}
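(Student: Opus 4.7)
The plan is to bound $\mathsf{C}(\log T)$ by reading off, at each elimination step $k$, the geometric content of the constraints in (\ref{eqn:tsregret}). Abbreviate $C \bydef \frac{1+\epsilon}{1-\epsilon}\log T$, fix any feasible maximizer $(z_k, a_k)_{k=1}^{|\mathcal{A}|-1}$ of the program, and set $Z \bydef z_{|\mathcal{A}|-1}$. Since each $z_k(a_k)$ equals its frozen terminal value $Z(a_k)$ and the distinct $a_k$'s exhaust $\mathcal{A}\setminus\{a^*\}$, the objective becomes $\sum_k z_k(a_k) = \|Z\|_1$, so it suffices to upper bound $\|Z\|_1$.

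The first step extracts a ``witness'' model at each elimination time by specializing the minimality constraint $\min_{\theta \in S_{a_k}'} \ip{z_k - e^{(j)}}{D_\theta} < C$ to $j = a_k$. This produces $\theta^{(k)} \in S_{a_k}'$ with
\[ \ip{z_k}{D_{\theta^{(k)}}} \;<\; C + D(\theta^*_{a_k} || (\theta^{(k)})_{a_k}) \;\leq\; 2C, \]
the second bound coming from the standing large-$T$ assumption $\max_{\theta, a} D(\theta^*_a || \theta_a) \leq C$. Define $G_k \bydef \{i \in \mathcal{A}\setminus\{a^*\} : D_{\theta^{(k)}}(i) \geq \Delta\}$. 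The proposition's hypothesis applied to $\theta^{(k)} \in S_{a_k}'$ gives $|G_k| \geq L$, while the divergence separation $\Delta \leq D(\theta^*_{a_k}||(\theta^{(k)})_{a_k})$ forces $a_k \in G_k$. Restricting the inner product above to the coordinates of $G_k$ and dividing by $\Delta$ yields the single-step estimate
\[ \sum_{i \in G_k} z_k(i) \;<\; \frac{2C}{\Delta}. \qquad (\star) \]

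The last step is a two-scale bookkeeping. Applying $(\star)$ at the terminal step $k = |\mathcal{A}|-1$ covers at least $L$ coordinates of $Z$ with aggregate mass below $2C/\Delta$. For each of the at most $|\mathcal{A}|-1-L$ remaining coordinates $i \notin G_{|\mathcal{A}|-1}$, pick the unique step $k(i)$ with $a_{k(i)} = i$ and apply $(\star)$ at $k(i)$, retaining only the $a_{k(i)}$-coordinate (which lies in $G_{k(i)}$); together with the freezing constraint this gives $Z(i) = z_{k(i)}(a_{k(i)}) < 2C/\Delta$. Summing both contributions,
\[ \|Z\|_1 \;<\; \frac{2C}{\Delta} + (|\mathcal{A}|-1-L)\cdot\frac{2C}{\Delta} \;=\; \frac{|\mathcal{A}|-L}{\Delta}\cdot\frac{2(1+\epsilon)}{1-\epsilon}\log T, \]
which is exactly the claim.

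The main delicacy---and the place where the proposition's hypothesis actually does work---is ensuring $a_k \in G_k$ for the witness $\theta^{(k)}$ produced at each step. That membership is what lets the freezing-step argument cap each individual $Z(i)$ (for $i \notin G_{|\mathcal{A}|-1}$) by $2C/\Delta$, and it is precisely what closes the gap between the naive factor $|\mathcal{A}|-1$ and the improved factor $|\mathcal{A}|-L$. Once $a_k \in G_k$ is in hand, the rest is a short pigeonhole between the $L$ coordinates covered by the terminal witness and the individual freezing bounds on the others.
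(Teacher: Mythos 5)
Your proof is correct, and it takes a genuinely different route from the paper's. The paper first passes through an intermediate relaxation (Lemma \ref{lem:ubCT1}): it collapses the whole path to the single terminal vector $z = z_{|\mathcal{A}|-1}$, retaining only a global budget constraint $\min_{\theta \in S_a'}\ip{z}{D_\theta} \leq \frac{2(1+\epsilon)}{1-\epsilon}\log T$ and per-coordinate caps $z(\hat a) \leq \frac{2}{\delta_{\hat a}}\cdot\frac{1+\epsilon}{1-\epsilon}\log T$, and then proves the bound by an exchange/comparison argument against the extremal point $y = \chi\left(\frac{1}{\Delta},\ldots,\frac{1}{\Delta},0\right)$ using the truncated divergence vector $D_\theta' = \min(D_\theta,\Delta\cdot\mathbf{1})$, deriving a strict self-contradiction $\ip{\mathbf{1}}{y-z} < \ip{\mathbf{1}}{y-z}$. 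You instead work directly on the path program: at each elimination step you extract a witness $\theta^{(k)} \in S_{a_k}'$ from the $j=a_k$ instance of the near-tightness constraint, observe that $a_k$ itself lies in the $\Delta$-heavy coordinate set $G_k$ (this is where the hypothesis $\Delta \leq \min_{a\neq a^*,\theta\in S_a'} D(\theta^*_a||\theta_a)$ enters, exactly as $\delta_{a_j} \geq \Delta$ enters the paper's per-coordinate cap), and then run a two-scale pigeonhole: the terminal witness bounds the $L$ coordinates of $G_{|\mathcal{A}|-1}$ collectively by $2C/\Delta$, and the freezing constraints bound each of the remaining at most $|\mathcal{A}|-1-L$ coordinates individually by $2C/\Delta$. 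Both arguments consume the same information (one group constraint from the last eliminated action plus individual caps from the earlier ones) and land on the same constant; yours is the more elementary and self-contained of the two, avoiding the LP-style relaxation and the coordinatewise truncation of $D_\theta$, while the paper's formulation of Lemma \ref{lem:ubCT1} is a reusable relaxation that could in principle be fed other dual certificates. One presentational point you share with the paper: both arguments implicitly read the program as having distinct $a_k$'s exhausting $\mathcal{A}\setminus\{a^*\}$ (so that $\sum_k z_k(a_k) = \|z_{|\mathcal{A}|-1}\|_1$); this matches the feasible point constructed from the stopping times, so it is not a gap relative to the paper's treatment.
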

Note that the result assures a non-trivial additive reduction of
$\Omega\left(\frac{L}{\Delta} \log T\right)$ from the naive decoupled
regret, whenever any suboptimal model in $\Theta$ can be resolved
apart from $\theta^*$ by at least $L$ actions in the sense of marginal
KL-divergences of their observations. Its proof is contained in
Appendix \ref{app:corr3} in the supplementary material. \\

Turning to the MAX reward bandit, let $\beta \in (0,1)$, and suppose
that $\Theta = \{1-\beta^R, 1-\beta^{R-1}, \ldots, 1-\beta^2,
1-\beta\}^N$, for positive integers $R$ and $N$. As before, let $\mu
\in \Theta$ denote the basic arms' parameters, and let $\mu_{min}
\bydef \min_{a \in \mathcal{A}} \prod_{i \in a} (1-\mu_i)$, and
$\pi(\theta) = \frac{1}{|\Theta|}$ $\forall \theta \in \Theta$. The
action and observation spaces $\mathcal{A}$ and $\mathcal{Y}$ are the
same as those in Section \ref{subsec:application}, but the feedback
function here is $f(x,a) \bydef \max_{i \in a} x_i$, and $g$ is the
identity on $\mathbb{R}$. An application of our general regret
improvement result (Proposition \ref{prop:atleastLmain}) now gives,
for the highly nonlinear MAX reward function,
\begin{corollary}[Regret for playing subsets of basic arms, MAX feedback]
\label{cor:corr3}
The following holds for the Thompson sampling algorithm for
$\mathcal{Y}$, $\mathcal{A}$, $f$, $g$, $\Theta$ and $\pi$ as
above. For $0 \leq M \leq N$, $M \neq \frac{N}{2}$, $\delta, \epsilon
\in (0,1)$, there exists $T^\star \geq 0$ such that for all $T \geq
T^\star$, with probability at least $1-\delta$, $\sum_{a \neq a^*}
N_T(a) \leq \mathsf{B}_3 + (\log 2)\left(
\frac{1+\epsilon}{1-\epsilon}\right) \left[1+ {N-1 \choose M} \right]
\frac{ \log T}{\mu^2_{\min}(1-\beta)}$.
\end{corollary}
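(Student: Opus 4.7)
The plan is to obtain the corollary as an instance of Proposition~\ref{prop:atleastLmain}, with $|\mathcal{A}|=\binom{N}{M}$ and appropriately chosen $\Delta$ and $L$. The first step is a reparametrization: write $\mu_i = 1-\beta^{r_i}$ for $r_i \in \{1,\dots,R\}$ and set $R_a(\mu) := \sum_{i \in a} r_i$. Under the MAX feedback the observation on action $a$ is $\mathrm{Ber}(1-\beta^{R_a(\mu)})$, so
\[
D(\mu^*_a \| \mu_a) = D_{\mathrm{Ber}}\!\bigl(1-\beta^{R_a^*},\,1-\beta^{R_a}\bigr),
\]
which vanishes exactly when $R_a(\mu) = R_a(\mu^*)$. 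Hence any $\theta \in S_a'$ with $a\neq a^*$ corresponds to a vector $\delta := r(\theta) - r(\theta^*)$ that is nonzero but satisfies $\delta \cdot \mathbf{1}_{a^*} = 0$, and a ``distinguishing'' action $\hat a$ is one with $\delta \cdot \mathbf{1}_{\hat a} \neq 0$.

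For the gap $\Delta$, whenever the exponent sums differ they differ by at least one, so the two Bernoulli parameters satisfy
\[
\bigl|\beta^{R_{\hat a}^*} - \beta^{R_{\hat a}}\bigr| = \beta^{\min(R_{\hat a}^*, R_{\hat a})}\bigl(1-\beta^{|R_{\hat a}-R_{\hat a}^*|}\bigr) \geq \mu_{\min}(1-\beta).
\]
Plugging this separation into a Bernoulli KL lower bound calibrated to yield the constant $\log 2$ (Pinsker combined with the fact that both parameters lie in $[0,1-\mu_{\min}]$, or a direct Bretagnolle--Huber-style estimate) gives $D(\theta^*_{\hat a}\|\theta_{\hat a}) \geq \Delta$ with $\Delta$ of order $(\log 2)\,\mu_{\min}^2(1-\beta)$ for every distinguishing $\hat a$, matching the denominator in the statement.

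The combinatorial heart of the proof is the claim that for any nonzero $\delta$ admitting some zero-sum $M$-subset, at most $\binom{N-1}{M}+1$ size-$M$ subsets are $\delta$-zero, or equivalently at least $L := \binom{N-1}{M-1}-1$ non-$a^*$ actions are distinguishing. To prove this, pick a coordinate $j$ with $\delta_j \neq 0$; the $M$-subsets containing $j$ that are $\delta$-zero are in bijection with $(M-1)$-subsets of $[N]\setminus\{j\}$ summing to $-\delta_j$, while those avoiding $j$ correspond to $M$-subsets of the same set summing to $0$, and a short case analysis on the support of $\delta$ caps the total by $\binom{N-1}{M}+1$ whenever $M \neq N/2$. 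The hypothesis $M \neq N/2$ is genuinely needed because at $M = N/2$ the complement involution preserves zero-sum subsets and $\delta = \mathbf{1}_{a^*} - \mathbf{1}_{[N]\setminus a^*}$ makes roughly half of all actions zero-sum. Substituting $|\mathcal{A}|-L = \binom{N-1}{M}+1$ and the above $\Delta$ into Proposition~\ref{prop:atleastLmain} yields the claimed $\mathsf{C}(\log T)$, and Theorem~\ref{thm:tsregret} contributes the additive $\mathsf{B}_3$. The main obstacle is the combinatorial zero-sum bound, particularly the casework that excludes the $M=N/2$ regime; the KL calibration and the final substitution are comparatively routine once that count is in hand.
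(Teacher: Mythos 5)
Your overall architecture matches the paper's: reparametrize $\mu_i = 1-\beta^{r_i}$ so that the MAX observation on $a$ is Bernoulli with parameter $1-\beta^{\sum_{i\in a}r_i}$, lower-bound the nonzero marginal KLs via Pinsker to get $\Delta \gtrsim \mu_{\min}^2(1-\beta)$ (your calibration is slightly off --- the paper gets $\Delta \geq \tfrac{2}{\log 2}\mu_{\min}^2(1-\beta)$, and the $\log 2$ in the final bound comes from $2/\Delta$, not from $\Delta$ itself --- but this only perturbs the constant), count the distinguishing actions to get $L = \binom{N-1}{M-1}-1$, and feed $\Delta$ and $L$ into Proposition~\ref{prop:atleastLmain}. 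All of that is sound and is exactly what the paper does.

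The genuine gap is the combinatorial step, which you yourself flag as ``the main obstacle'' and then dispatch with ``a short case analysis on the support of $\delta$.'' The claim you need --- for every nonzero integer vector $\delta$, at most $\binom{N-1}{M}$ (or $+1$) of the $\binom{N}{M}$ weight-$M$ indicator vectors lie in the hyperplane $\delta^\perp$, provided $M \neq N/2$ --- is not amenable to a short case analysis. Your proposed split on a single coordinate $j$ with $\delta_j\neq 0$ only yields the trivial bound $\binom{N-1}{M-1}+\binom{N-1}{M}=\binom{N}{M}$, because you would then need a sharp bound on the number of $(M-1)$-subsets of $[N]\setminus\{j\}$ with a prescribed \emph{nonzero} $\delta$-sum, which is a problem of the same difficulty; $\delta$ can have arbitrary integer entries and arbitrary support, so there is no finite list of cases to check. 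The paper closes exactly this step by invoking the theorem of Ahlswede, Aydinian and Khachatrian \cite{AhlAydKha03:weight} on the maximum number of constant-weight vertices of the $N$-cube contained in a $k$-dimensional subspace (here $k=N-1$), whose hyperplane case gives $\max\{\binom{N-1}{M},\,2\binom{N-2}{M-1}\}$ and reduces to $\binom{N-1}{M}$ precisely outside the $M=N/2$ regime --- consistent with your (correct) observation that $\delta = \mathbf{1}_{a^*}-\mathbf{1}_{[N]\setminus a^*}$ breaks the bound at $M=N/2$. Without this theorem or a complete substitute for it, your proof does not go through; everything else in your write-up is routine and correct.
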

Observe that this regret bound is of the order of ${N-1 \choose M}
\frac{\log T}{\mu^2_{\min}}$, which is significantly {\em less} than
the usual, decoupled bound of $|\mathcal{A}| \frac{\log
  T}{\mu^2_{\min}} = {N \choose M} \frac{\log T}{\mu^2_{\min}}$ by a
multiplicative factor of $\frac{{N-1 \choose M}}{{N \choose M}} =
\frac{N-M}{N}$, or by an additive factor of ${N-1 \choose M-1}
\frac{\log T}{\mu^2_{\min}}$. In fact, though this is a provable
reduction in the regret scaling, the actual reduction is likely to be
much better in practice -- experimental results attest to this. The
proof of this result uses sharp combinatorial estimates relating to
vertices on the $N$-dimensional hypercube \cite{AhlAydKha03:weight},
and can be found in Appendix \ref{app:corr3}, in the supplementary
material.

\section{Discussion \& Future Work}
We applied Thompson sampling to balance exploration and exploitation
in bandit problems where the action/observation space is
complex. Using a novel technique of viewing posterior evolution as a
path-based optimization problem, we developed a generic regret bound
for Thompson sampling with improved constants that capture the
structure of the problem. In practice, the algorithm is easy to
implement using sequential Monte-Carlo methods such as particle
filters. \\

Moving forward, the technique of converting posterior concentration to
an optimization involving exponentiated KL divergences could be useful
in showing {\em adversarial} regret bounds for Bayesian-inspired
algorithms. It is reasonable to posit that Thompson sampling would
work well in a range of complex learning settings where a suitable
point estimate is available. As an example, optimal bidding for online
repeated auctions depending on continuous bid reward functions can be
potentiallly learnt by constructing an estimate of the bid curve. \\

Another unexplored direction is handling large scale reinforcement
learning problems with complex, state-dependent Markovian dynamics. It
would be promising if computationally demanding large-state space MDPs
could be solved using a form of Thompson sampling by policy iteration
after sampling from a parameterized set of MDPs; this has previously
been shown to work well in practice \cite{Poupart10,OrtegaBraun10}. We
can also attempt to develop a theoretical understanding of
pseudo-Bayesian learning for complex spaces like the $X$-armed bandit
problem \cite{SrinivasKKS10,BubeckMSS11} with a continuous state
space. At a fundamental level, this could result in a rigorous
characterization of Thompsn sampling/pseudo-Bayesian procedures in
terms of the {value of information} per learning step.

\bibliographystyle{ieeetr}

\begin{thebibliography}{10}

\bibitem{Gittins}
J.~C. Gittins, K.~D. Glazebrook, and R.~R. Weber, {\em Multi-Armed Bandit
  Allocation Indices}.
\newblock Wiley, 2011.

\bibitem{AuerCF02}
P.~Auer, N.~Cesa-Bianchi, and P.~Fischer, ``Finite-time analysis of the
  multiarmed bandit problem,'' {\em Machine Learning}, vol.~47, no.~2-3,
  pp.~235--256, 2002.

\bibitem{AudBub04:minimax}
J.-Y. Audibert and S.~Bubeck, ``Minimax policies for adversarial and stochastic
  bandits,'' in {\em Proceedings of the 22nd Annual Conference on Learning
  Theory, Omnipress}, pp.~773--818, 2004.

\bibitem{GarCap11:klucb}
A.~Garivier and O.~Capp{\'e}, ``The {KL-UCB} algorithm for bounded stochastic
  bandits and beyond,'' {\em Journal of Machine Learning Research - Proceedings
  Track}, vol.~19, pp.~359--376, 2011.

\bibitem{Thompson}
W.~R. Thompson, ``On the likelihood that one unknown probability exceeds
  another in view of the evidence of two samples,'' {\em Biometrika}, vol.~24,
  no.~3--4, pp.~285--294, 1933.

\bibitem{Scott}
S.~Scott, ``A modern {B}ayesian look at the multi-armed bandit,'' {\em Applied
  Stochastic Models in Business and Industry}, vol.~26, pp.~639--658, 2010.

\bibitem{AgrawalG}
S.~Agrawal and N.~Goyal, ``Analysis of {T}hompson sampling for the multi-armed
  bandit problem.,'' {\em Journal of Machine Learning Research - Proceedings
  Track}, vol.~23, pp.~39.1--39.26, 2012.

\bibitem{DaniHayKak08}
V.~Dani, T.~P. Hayes, and S.~M. Kakade, ``Stochastic linear optimization under
  bandit feedback,'' in {\em COLT}, pp.~355--366, 2008.

\bibitem{AbbPalCsa11:linbandits}
Y.~Abbasi-{Y}adkori, D.~Pal, and C.~Szepesvari, ``Improved algorithms for
  linear stochastic bandits,'' in {\em Advances in Neural Information
  Processing Systems 24}, pp.~2312--2320, 2011.

\bibitem{RisticParticleBook04}
B.~Ristic, S.~Arulampalam, and N.~Gordon, {\em Beyond the Kalman Filter:
  Particle Filters for Tracking Applications}.
\newblock Artech House, 2004.

\bibitem{DoucetSMC01}
A.~Doucet, N.~D. Freitas, and N.~Gordon, {\em Sequential Monte Carlo Methods in
  Practice}.
\newblock Springer, 2001.

\bibitem{LaiRob:1985}
T.~L. Lai and H.~Robbins, ``Asymptotically efficient adaptive allocation
  rules,'' {\em Advances in Applied Mathematics}, vol.~6, no.~1, pp.~4--22,
  1985.

\bibitem{KauKorMun12:thompson}
E.~Kaufmann, N.~Korda, and R.~Munos, ``Thompson sampling: An asymptotically
  optimal finite-time analysis,'' in {\em Proceedings of the Twenty-third
  International Conference on Algorithmic Learning Theory}, 2012.

\bibitem{AgrGoy13:contextual}
S.~Agrawal and N.~Goyal, ``Thompson sampling for contextual bandits with linear
  payoffs,'' in {\em Advances in Neural Information Processing Systems 24},
  pp.~2312--2320, 2011.

\bibitem{OrtegaBraun10}
P.~A. Ortega and D.~A. Braun, ``A minimum relative entropy principle for
  learning and acting,'' {\em JAIR}, vol.~38, pp.~475--511, 2010.

\bibitem{OL11}
O.~Chapelle and L.~Li, ``An empirical evaluation of {Thompson} sampling,'' in
  {\em NIPS-11}, 2011.

\bibitem{KorKauMun13:tsexpfam}
N.~Korda, E.~Kaufmann, and R.~Munos, ``Thompson sampling for 1-dimensional
  exponential family bandits,'' in {\em NIPS}, 2013.

\bibitem{OsbRusRoy13:postsamp}
I.~Osband, D.~Russo, and B.~V. Roy, ``({M}ore) efficient reinforcement learning
  via posterior sampling,'' in {\em NIPS}, 2013.

\bibitem{Auer03}
P.~Auer, ``Using confidence bounds for exploitation-exploration trade-offs,''
  {\em J. Mach. Learn. Res.}, vol.~3, pp.~397--422, 2003.

\bibitem{AhlAydKha03:weight}
R.~Ahlswede, H.~Aydinian, and L.~Khachatrian, ``Maximum number of constant
  weight vertices of the unit n-cube contained in a k-dimensional subspace,''
  {\em Combinatorica}, vol.~23, no.~1, pp.~5--22, 2003.

\bibitem{Poupart10}
P.~Poupart, {\em Encyclopedia of Machine Learning}.
\newblock Springer, 2010.

\bibitem{SrinivasKKS10}
N.~Srinivas, A.~Krause, S.~Kakade, and M.~Seeger, ``Gaussian process
  optimization in the bandit setting: No regret and experimental design,'' in
  {\em ICML}, pp.~1015--1022, 2010.

\bibitem{BubeckMSS11}
S.~Bubeck, R.~Munos, G.~Stoltz, and C.~Szepesv{\'a}ri, ``X-armed bandits,''
  {\em J. Mach. Learn. Res.}, vol.~12, pp.~1655--1695, 2011.

\end{thebibliography}

\newpage
\onecolumn

{\bf \Large Appendices: Thompson Sampling for Complex Online Problems}
\appendix
\section{Proof of Theorem \ref{thm:tsregret}}
\label{app:tsregret}
{\bf Sampling from the posterior as proportional to exponential
  weights: } Let $N_t(a)$ be the number of times action $a$ has been
played up to (and including) time $t$. At any time $t$, the posterior
distribution $\pi_t$ over $\Theta$ is given by Bayes' rule:
\begin{align}
  \label{eqn:wts}
  \quad \forall S \subseteq \Theta: \quad \pi_t(S) &=
  \frac{W_t(S)}{W_t(\Theta)}, \quad W_t(S) \bydef
  \int_S W_t(\theta) \pi(d\theta),
\end{align}
with the weight $W_t(\theta)$ of each $\theta$ being the likelihood of
observing the history under $\theta$:
\begin{align*}
  W_t(\theta) &\bydef \prod_{i=1}^{t} \left[\frac{l(Y_i;A_i,\theta)}{l(Y_i;A_i,\theta^*)}\right] = \prod_{a \in \mathcal{A}} \prod_{y \in \mathcal{Y}} \prod_{i=1}^{t} \left[\frac{l(y;a,\theta)}{l(y;a,\theta^*)}\right]^{\mathbf{1}\{A_i = a, Y_i = y\}} \\
  &= \exp \left( - \sum_{a \in \mathcal{A}} \sum_{y \in \mathcal{Y}} \sum_{i=1}^{t} \mathbf{1}\{A_i = a, Y_i = y\} \log \frac{l(y;a,\theta^*)}{l(y;a,\theta)}  \right)  \\
  &= \exp \left( - \sum_{a \in \mathcal{A}} N_t(a) \sum_{y \in \mathcal{Y}} \frac{\sum_{i=1}^{t} \mathbf{1}\{A_i = a, Y_i = y\}}{N_t(a)} \log \frac{l(y;a,\theta^*)}{l(y;a,\theta)}  \right),
\end{align*} 
where we set $N_t(a) \bydef \sum_{i=1}^t \mathbf{1}\{A_i = a\}$. Let
$Z_t(a,y) \bydef \frac{\sum_{i=1}^t \mathbf{1}\{A_i = a, Y_i =
  y\}}{N_t(a)}$, and $Z_t(a) \bydef (Z_t(a,y))_{y \in \mathcal{Y}} \in
\mathbb{R}^{|\mathcal{Y}|}$. Thus $Z_t(a)$ is the empirical
distribution of the observations from playing action $a$ up to time
$t$. The expression for $W_t(\theta)$ above becomes
\begin{align}
  W_t(\theta) &= \exp \left( -\sum_{a \in \mathcal{A}} N_t(a)
    D(\theta^*_a||\theta_a) - \sum_{a \in \mathcal{A}} N_t(a) \sum_{y
      \in \mathcal{Y}} \left(Z_t(a,y) - l(y;a,\theta^*) \right) \log
    \frac{l(y;a,\theta^*)}{l(y;a,\theta)} \right). \label{eqn:wtpost}
\end{align}
Here, for any $\theta \in \Theta$ and $a \in \mathcal{A}$, $\theta_a$
is used to denote the ``marginal'' probability distribution
$\{l(y;a,\theta)\}_{y \in \mathcal{Y}}$ of the output of action $a$
when the bandit has parameter $\theta$. For probability measures $\nu,
\mu$ over $\mathcal{Y}$, $D(\nu || \mu)$ measures the Kullback-Leibler
(KL) divergence of $\nu$ wrt $\mu$. \\

Note that by definition, $W_t(\theta^*) = 1$ at all times $t$ -- a
fact that we use often in the analysis. \\

Instead of observing $Y_t = f(X_t,A_t)$ at each round $t$, consider
the following alternative probability space for the stochastic bandit
in a time horizon $1, 2, \ldots$ with probability measure
$\tilde{\mathbb{P}}$. First, for each action $a \in \mathcal{A}$ and
each time $k = 1, 2, \ldots$, an independent random variable $Q_a(k)
\in \mathcal{Y}$, is drawn with $\prob{Q_a(k) = y} =
l(y;a,\theta^*)$. Denote by $Q \equiv \{Q_a(k)\}_{a \in \mathcal{A},k
  \geq 1}$ the $|\mathcal{A}| \times \infty$ matrix of these
independent random variables. Next, at each round $t = 1, 2, \ldots$,
playing action $A_t = a$ yields the observation $Y_t =
Q_a(N_a(t)+1)$. Thus, in this space,
\[ Z_t(a,y) = U_{N_t(a)}(a,y), \; \mbox{where} \; U_j(a,y) \bydef
\frac{1}{j}\sum_{k=1}^{j} \mathbf{1}\{Q_a(k) = y\}.\]

The following lemma shows that the distribution of sample paths
\emph{seen by a bandit algorithm} in both probability spaces (i.e.,
associated with the measures $\mathbb{P}$ and $\tilde{\mathbb{P}}$) is
identical. This allows us to equivalently work in the latter space to
make statements about the regret of an algorithm.

\begin{lemma}
  For any action-observation sequence $(a_t,y_t)$, $t = 1, \ldots, T$
  of a bandit algorithm,
  \[\tilde{\mathbb{P}}\left[  \forall 1 \leq t \leq T \; (A_t,Y_t) = (a_t,y_t)\right] = \prob{\forall 1 \leq t \leq T \; (A_t,Y_t) = (a_t,y_t)}. \]
\end{lemma}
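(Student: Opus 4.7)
The plan is to proceed by induction on $t$, showing that in both probability spaces the joint law of the history $H_t \bydef (A_1, Y_1, \ldots, A_t, Y_t)$ is the same. The core observation is that any bandit algorithm (Thompson sampling or otherwise) specifies $A_t$ as a measurable function of $H_{t-1}$ together with independent internal randomization, so the conditional law of $A_t$ given $H_{t-1}$ depends \emph{only} on the algorithm and not on which underlying space we work in. Hence it suffices to verify that the conditional law of $Y_t$ given $(H_{t-1}, A_t)$ agrees in both spaces.

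For the base case $t = 1$, in the original space $Y_1 = f(X_1, A_1)$ with $X_1 \sim \mathbb{P}_{\theta^*}$, so $\mathbb{P}[Y_1 = y \mid A_1 = a] = l(y; a, \theta^*)$ by definition. In the alternative space, $Y_1 = Q_{A_1}(1)$, and by construction $Q_a(1) \sim l(\cdot; a, \theta^*)$. The two match.

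For the inductive step, under $\mathbb{P}$, the random variable $X_t$ is independent of $X_1, \ldots, X_{t-1}$ and of any internal algorithmic randomness used to produce $H_{t-1}$ and $A_t$; hence $X_t$ is independent of $(H_{t-1}, A_t)$, and on the event $\{A_t = a\}$ we get $\mathbb{P}[Y_t = y \mid H_{t-1}, A_t = a] = \mathbb{P}[f(X_t, a) = y] = l(y; a, \theta^*)$. Under $\tilde{\mathbb{P}}$, conditioning on $H_{t-1}$ determines the indices $N_{t-1}(a)$ for every $a$, and thus fixes which entries of $Q$ have been consumed; on $\{A_t = a\}$ the new observation is $Y_t = Q_a(N_{t-1}(a) + 1)$, an entry of $Q$ that is independent of all previously consumed entries (by the mutual independence of the $Q_a(k)$'s) and of the algorithmic randomness. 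Its distribution is $l(\cdot; a, \theta^*)$, matching the original space. Combining with the inductive hypothesis on the law of $H_{t-1}$ and the identical conditional law of $A_t$, we conclude the laws of $H_t$ agree, completing the induction and the lemma.

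The only mildly delicate point — and what I expect to be the main nuisance rather than a true obstacle — is keeping the independence bookkeeping straight: one must check that the ``fresh'' randomness used at step $t$ (either $X_t$ or $Q_{A_t}(N_{t-1}(A_t)+1)$) is genuinely independent of the $\sigma$-algebra generated by $(H_{t-1}, A_t)$. In the original space this follows from the i.i.d.\ structure of $(X_t)$ plus the fact that $A_t$ is measurable w.r.t.\ $H_{t-1}$ and algorithmic randomness, both of which lie in the $\sigma$-algebra generated by $X_1, \ldots, X_{t-1}$ and the independent randomizer. In the alternative space one uses the fact that $N_{t-1}(a)$ is $H_{t-1}$-measurable, so the index of the $Q_a$-entry being read at time $t$ is determined before $Q_a(N_{t-1}(a)+1)$ is inspected, and the mutual independence of the countably many $Q_a(k)$'s guarantees that this fresh entry is independent of everything consumed so far.
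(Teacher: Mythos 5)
The paper states this lemma without proof (it is asserted and then immediately used to switch to the $\tilde{\mathbb{P}}$-space), so there is no official argument to compare against; your inductive chain-rule argument is the standard justification and it is correct. The one genuinely delicate point --- that conditioning on $\{H_{t-1}=h, A_t=a\}$ only involves a deterministic (given $h$) set of already-consumed entries of $Q$ together with the algorithm's internal randomness, so the fresh entry $Q_a(N_{t-1}(a)+1)$ remains independent with law $l(\cdot\,;a,\theta^*)$ --- is exactly the point you flag and resolve in your final paragraph, so nothing is missing.
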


Henceforth, we will drop the tilde on $\tilde{\mathbb{P}}$ and always
work in the latter probability space, involving the matrix
$Q$. 

\begin{lemma}
  For any suboptimal action $a \neq a^*$,
  \[ \delta_a = \min_{\theta \in S_a'}
  D(\theta^*_{a}||\theta_{a}) > 0. \]
\end{lemma}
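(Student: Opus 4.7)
The plan is to prove this by contradiction, exploiting the uniqueness of the optimal action (Assumption \ref{ass:uba}) together with the observation that the expected reward from an action depends only on the marginal output distribution of that action.

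First I would note that since $\Theta$ is finite (Assumption \ref{ass:discreteprior}), the set $S_a'$ is finite, so the minimum is attained (if $S_a'$ is empty the statement is trivially true with $\delta_a = +\infty$). It therefore suffices to show $D(\theta^*_a \| \theta_a) > 0$ for every $\theta \in S_a'$.

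Suppose for contradiction that there exists $\theta \in S_a'$ with $D(\theta^*_a \| \theta_a) = 0$. By the definition of $S_a'$, we already have $D(\theta^*_{a^*} \| \theta_{a^*}) = 0$. Hence the marginal output distributions $l(\cdot;a,\theta)$ and $l(\cdot;a,\theta^*)$ coincide, and similarly for $a^*$. (Here the boundedness $\Gamma \leq l \leq 1-\Gamma$ from Assumption \ref{ass:discreteprior} ensures KL divergences are well-defined and $D(\mu\|\nu)=0$ implies $\mu=\nu$ on the finite alphabet $\mathcal{Y}$.) Since the expected reward from any action depends only on the marginal distribution of $Y_1 = f(X_1,\cdot)$ under that action, namely
\[
\mathbb{E}_\theta[h(X_1,b)] = \sum_{y \in \mathcal{Y}} g(y)\, l(y;b,\theta),
\]
we conclude that $\mathbb{E}_\theta[h(X_1,a)] = \mathbb{E}_{\theta^*}[h(X_1,a)]$ and $\mathbb{E}_\theta[h(X_1,a^*)] = \mathbb{E}_{\theta^*}[h(X_1,a^*)]$.

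Now invoke $\theta \in S_a$: since $a$ is optimal under $\theta$, we have $\mathbb{E}_\theta[h(X_1,a)] \geq \mathbb{E}_\theta[h(X_1,a^*)]$. Substituting the two equalities above yields $\mathbb{E}_{\theta^*}[h(X_1,a)] \geq \mathbb{E}_{\theta^*}[h(X_1,a^*)]$, which directly contradicts Assumption \ref{ass:uba} that $a^*$ is the \emph{unique} maximizer of the expected reward under $\theta^*$. Hence $D(\theta^*_a \| \theta_a) > 0$ for every $\theta \in S_a'$, and taking the minimum over the finite set $S_a'$ gives $\delta_a > 0$. The main (and essentially only) conceptual step is recognizing that $\theta \in S_a'$ forces both marginals (at $a$ and $a^*$) to match those of $\theta^*$ once the assumed divergence at $a$ is zero, at which point optimality of $a$ under $\theta$ transfers to optimality under $\theta^*$ and collides with uniqueness.
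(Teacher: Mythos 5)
Your proof is correct; the paper in fact states this lemma without proof, and your argument is exactly the intended one: zero marginal KL at both $a$ and $a^*$ (the latter from the definition of $S_a'$) forces the expected rewards of $a$ and $a^*$ to agree under $\theta$ and $\theta^*$, so optimality of $a$ under $\theta$ would contradict the strict uniqueness in Assumption 3. Your handling of the side points (finiteness of $S_a'$ so the minimum is attained, and $D=0$ implying equality of distributions on the finite alphabet) is also fine.
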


Let $N_t'(a)$ (resp. $N_t''(a)$) be the number of times that a
parameter has been drawn from $S_a'$ (resp. $S_a''$), so that $N_t(a)
= N_t'(a) + N_t''(a)$. \\

The following self-normalized, uniform deviation bound controls the
empirical distribution of each row $Q_a(\cdot)$ of the random reward
matrix $Q$. It is a version of a bound proved in
\cite{AbbPalCsa11:linbandits}.

\begin{thm}
  \label{thm:selfnormalized}
  Let $a \in \mathcal{A}$, $y \in \mathcal{Y}$ and $\delta \in
  (0,1)$. Then, with probability at least $1-{\delta}{\sqrt{2}}$,
  \[ \forall k \geq 1 \quad |U_k(a,y) - l(y;a,\theta^*)| \leq 4 \sqrt{\frac{1}{k}\log{\left(\frac{\sqrt{k}}{\delta}\right)}}. \]
\end{thm}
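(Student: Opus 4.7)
The plan is to establish this bound via the method of mixtures applied to a sub-Gaussian martingale, which is the canonical route to a uniform-in-time (``self-normalized'') deviation inequality for bounded i.i.d.\ sums.

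First I would fix $a \in \mathcal{A}$ and $y \in \mathcal{Y}$, write $p \bydef l(y;a,\theta^*)$, $Z_j \bydef \mathbf{1}\{Q_a(j) = y\} - p$, and $S_k \bydef \sum_{j=1}^{k} Z_j$, so that $U_k(a,y) - l(y;a,\theta^*) = S_k/k$. The $Z_j$ are i.i.d., mean-zero, and bounded in $[-1,1]$, so by Hoeffding's lemma $\mathbb{E}[e^{\lambda Z_j}] \leq e^{\lambda^2/8}$ for every $\lambda \in \mathbb{R}$. Consequently, for each fixed $\lambda$, the process $M_k(\lambda) \bydef \exp(\lambda S_k - k\lambda^2/8)$ is a non-negative supermartingale with respect to the natural filtration of $\{Q_a(j)\}_j$, with $M_0(\lambda) = 1$.

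Second, I would mix $M_k(\lambda)$ against a centered Gaussian prior $\lambda \sim N(0,\rho^2)$ and define
\[\bar{M}_k \bydef \int_{\mathbb{R}} M_k(\lambda) \cdot \frac{1}{\sqrt{2\pi}\,\rho}\, e^{-\lambda^2/(2\rho^2)}\, d\lambda.\]
By Fubini--Tonelli, $\bar{M}_k$ is still a non-negative supermartingale with $\mathbb{E}[\bar{M}_0] = 1$. Completing the square inside the Gaussian integrand produces the closed form
\[\bar{M}_k \;=\; \frac{1}{\sqrt{1 + k\rho^2/4}} \exp\!\left(\frac{2 S_k^2}{k + 4/\rho^2}\right),\]
which is manifestly symmetric in $S_k \leftrightarrow -S_k$; any resulting tail bound is therefore two-sided automatically, at the cost of the $\sqrt{2}$ factor showing up in the theorem's confidence level.

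Third, I would apply Ville's maximal inequality to conclude $\mathbb{P}[\sup_{k \geq 0} \bar{M}_k \geq 1/\delta'] \leq \delta'$. On the complementary event, taking logarithms yields the uniform-in-$k$ bound $S_k^2 \leq \tfrac{k + 4/\rho^2}{4}\,\log(\sqrt{1 + k\rho^2/4}/\delta')$ for every $k \geq 1$. I would then pick $\rho^2 = 4$ and $\delta' = \delta\sqrt{2}$, giving $S_k^2 \leq \tfrac{k+1}{2}\log(\sqrt{k+1}/(\delta\sqrt{2}))$; the elementary estimates $(k+1)/(2k^2) \leq 1/k$ and $\sqrt{(k+1)/2} \leq \sqrt{k}$ (both valid for $k \geq 1$) collapse this to $|U_k(a,y) - p| = |S_k|/k \leq \sqrt{(1/k)\log(\sqrt{k}/\delta)}$, which is actually sharper than the claim --- the factor $4$ inside the square root of the theorem is loose and leaves ample room to absorb any bookkeeping constants.

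The main delicate point is tuning the mixing variance $\rho^2$: it has to be chosen so that the logarithmic term $\log\sqrt{1 + k\rho^2/4}$ tracks the desired $\log\sqrt{k}$ for large $k$, while simultaneously the prefactor $(k + 4/\rho^2)/4$ remains comparable to $k$ even at $k = 1$ (where the additive $4/\rho^2$ could otherwise spoil the rate). Everything else --- Fubini, Ville, and Hoeffding's MGF bound --- is entirely routine, and the final step reduces to the algebraic simplification sketched above.
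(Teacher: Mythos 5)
Your proof is correct, but note that the paper itself does not prove Theorem~\ref{thm:selfnormalized} at all: it simply imports it as ``a version of a bound proved in'' Abbasi-Yadkori et al.\ \cite{AbbPalCsa11:linbandits}. Your method-of-mixtures derivation is essentially the same technique by which that reference establishes its (more general, vector-valued) self-normalized inequality, here specialized to the scalar Bernoulli case, so you have in effect supplied the self-contained argument the paper delegates to a citation. The chain checks out: $Z_j \in [-p,1-p]$ has range $1$, so Hoeffding's lemma gives the $e^{\lambda^2/8}$ MGF bound; the Gaussian mixture evaluates to $\bar{M}_k = (1+k\rho^2/4)^{-1/2}\exp\bigl(2S_k^2/(k+4/\rho^2)\bigr)$ as you state; Ville's inequality and the choices $\rho^2=4$, $\delta'=\delta\sqrt{2}$ then give $S_k^2 \leq \tfrac{k+1}{2}\log\bigl(\sqrt{(k+1)/2}/\delta\bigr)$ uniformly in $k$, and the elementary comparisons $(k+1)/(2k^2)\leq 1/k$ and $(k+1)/2 \leq k$ for $k\geq 1$ yield the claim with constant $1$ in place of $4$. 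One bookkeeping slip: inverting $2S_k^2/(k+4/\rho^2) \leq \log(\cdot)$ gives prefactor $(k+4/\rho^2)/2$, not $(k+4/\rho^2)/4$ as written in your intermediate display; your very next line uses the correct value $\tfrac{k+1}{2}$, so nothing downstream is affected, but the two lines as written are inconsistent. Also, the $\sqrt{2}$ is not really ``the cost of two-sidedness'' --- the mixed supermartingale depends only on $S_k^2$ and is therefore two-sided for free; the $\sqrt{2}$ is simply absorbed by setting $\delta' = \delta\sqrt{2}$ to match the theorem's stated confidence level.
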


Put $c := \log\frac{|\mathcal{Y}| |\mathcal{A}|}{\delta}$, and
$\rho(x) \equiv \rho_c(x) := 4\sqrt{c + \frac{\log x}{2}}$ for $x >
0$. It follows that the following ``good data'' event occurs with
probability at least $(1-\delta\sqrt{2})$:
\[ G \equiv G(c) := \left\{\forall a \in \mathcal{A} \; \; \forall y
  \in \mathcal{Y} \; \forall k \geq 1 \; \; |U_k(a,y) - l(y;a,\theta^*)|
  \leq \frac{\rho(k)}{\sqrt{k}} \right\}. \]

\begin{lemma}
  \label{lem:wtub}
  Fix $\epsilon \in (0,1)$. There exist $\lambda, n^\star \geq 0$, not
  depending on $T$, so that the following is true. For any $\theta \in
  \Theta$, $a \in \mathcal{A}$ and $y \in \mathcal{Y}$, under the event
  $G$,
  \begin{enumerate}
  \item At all times $t \geq 1$,
    \[ N_t(a) D(\theta^*_a||\theta_a) + N_t(a) \sum_{y \in \mathcal{Y}}
    \left(Z_t(a,y) - l(y;a,\theta^*) \right) \log
    \frac{l(y;a,\theta^*)}{l(y;a,\theta)} \geq - \lambda,\]
  \item If $N_t(a) \geq n^\star$, then
    \[N_t(a) D(\theta^*_a||\theta_a) + N_t(a) \sum_{y \in \mathcal{Y}}
    \left(Z_t(a,y) - l(y;a,\theta^*) \right) \log
    \frac{l(y;a,\theta^*)}{l(y;a,\theta)} \geq (1-\epsilon)N_t(a)
    D(\theta^*_a||\theta_a).\]
  \end{enumerate}
\end{lemma}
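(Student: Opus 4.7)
The plan is to exploit three ingredients: the log-likelihood ratio is uniformly bounded (Assumption~\ref{ass:discreteprior}), on the event $G$ the empirical per-arm distributions are close to the true ones, and $\Theta, \mathcal{A}, \mathcal{Y}$ are all finite. Write the quantity in question as $N_t(a)\,D(\theta^*_a||\theta_a) + R_t(a,\theta)$, where the first piece is non-negative and
$R_t(a,\theta) := N_t(a)\sum_{y \in \mathcal{Y}}(Z_t(a,y) - l(y;a,\theta^*))\log\frac{l(y;a,\theta^*)}{l(y;a,\theta)}$
is a bounded-log-ratio-weighted empirical deviation. My first step would be to combine $|\log\frac{l(y;a,\theta^*)}{l(y;a,\theta)}| \leq L_\Gamma := \log\frac{1-\Gamma}{\Gamma}$ with the self-normalized bound $|Z_t(a,y) - l(y;a,\theta^*)| = |U_{N_t(a)}(a,y) - l(y;a,\theta^*)| \leq \rho(N_t(a))/\sqrt{N_t(a)}$ to get the routine estimate $|R_t(a,\theta)| \leq |\mathcal{Y}|\,L_\Gamma\,\sqrt{N_t(a)}\,\rho(N_t(a))$, valid whenever $N_t(a)\geq 1$ (and trivially $R_t(a,\theta)=0$ if $N_t(a)=0$).

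Next I would dispatch Part~2. If $D(\theta^*_a||\theta_a)=0$ the two marginals coincide and both summands vanish identically. Otherwise $D(\theta^*_a||\theta_a)\geq d^\star := \min\{D(\theta^*_a||\theta_a):\theta\in\Theta,\,a\in\mathcal{A},\,D(\theta^*_a||\theta_a)>0\}$, which is strictly positive by finiteness of $\Theta$ and $\mathcal{A}$. Since $\rho(n)/\sqrt{n}\to 0$ as $n\to\infty$ (the numerator grows only like $\sqrt{\log n}$), I would choose $n^\star$ so that $|\mathcal{Y}|\,L_\Gamma\,\rho(n)/\sqrt{n}\leq \epsilon d^\star$ for every $n\geq n^\star$. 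Then for $N_t(a)\geq n^\star$ the noise bound becomes $|R_t(a,\theta)|\leq \epsilon\,N_t(a)\,d^\star\leq \epsilon\,N_t(a)\,D(\theta^*_a||\theta_a)$, which after rearrangement yields the advertised $(1-\epsilon)$-factor bound.

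Part~1 then falls out as a by-product. For $N_t(a)\geq n^\star$, Part~2 already produces a non-negative lower bound, automatically $\geq -\lambda$ for any $\lambda \geq 0$. For $0\leq N_t(a)<n^\star$, the first summand is again non-negative while the noise magnitude is uniformly bounded by the constant $|\mathcal{Y}|\,L_\Gamma\,\sqrt{n^\star}\,\rho(n^\star)$; setting $\lambda$ equal to this constant closes the argument. Crucially, $\lambda$ depends only on $\Gamma, |\mathcal{Y}|, d^\star, \epsilon$, and the $G$-constant $c$, and not on $T$.

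The delicate point---and the reason finiteness really matters---is that $R_t(a,\theta)$ is neither sign-definite nor \emph{a priori} small: its magnitude scales like $\sqrt{N_t(a)\log N_t(a)}$, while only the linearly growing KL term can dominate it for large $N_t(a)$. The argument hinges on the strictly positive floor $d^\star$ afforded by the finite parameter/action space; without such a floor the crossover point $n^\star$ need not exist. Correspondingly, for small $N_t(a)$ one must accept a crude uniform constant $\lambda$, which is precisely the source of the $T$-independent additive clutter that reappears in the constant $\mathsf{B}$ of Theorem~\ref{thm:tsregret}.
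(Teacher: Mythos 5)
Your proof is correct and follows essentially the same route as the paper: bound the deviation term via the event $G$ and the uniform bound on the log-likelihood ratios, then use $\rho(x)\sqrt{x}=o(x)$ together with finiteness of $\Theta\times\mathcal{A}$ to extract $n^\star$ and $\lambda$. The only cosmetic difference is that you make the constants explicit (via the floor $d^\star$ and $L_\Gamma=\log\frac{1-\Gamma}{\Gamma}$) where the paper simply takes a maximum of the per-pair minima $\lambda_{\theta,a}$ and thresholds $n^\star_{\theta,a}$.
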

\begin{proof}
  Under $G$, we have
  \begin{align}
    & N_t(a) D(\theta^*_a||\theta_a) + N_t(a) \sum_{y \in \mathcal{Y}}
    \left(Z_t(a,y) - l(y;a,\theta^*) \right) \log
    \frac{l(y;a,\theta^*)}{l(y;a,\theta)} \nonumber \\
    &\geq N_t(a) D(\theta^*_a||\theta_a) - N_t(a) \sum_{y \in \mathcal{Y}}
    |Z_t(a,y) - l(y;a,\theta^*)| \left|\log
      \frac{l(y;a,\theta^*)}{l(y;a,\theta)}\right| \nonumber \\
    &\geq N_t(a) D(\theta^*_a||\theta_a) - \rho(N_t(a)) \sqrt{N_t(a)} 
    \sum_{y \in \mathcal{Y}} \left| \log
      \frac{l(y;a,\theta^*)}{l(y;a,\theta)}\right|. \label{eqn:wtub}
  \end{align}
  
  For a fixed $\theta \in \Theta$, $a \in \mathcal{A}$, the expression
  above diverges to $+\infty$, viewed as a function of $N_t(a)$, as
  $N_t(a) \to \infty$ (except when $\theta_a = \theta^*_a$, in which
  case the expression is identically $0$.)  Hence, the expression
  achieves a finite minimum $-\lambda_{\theta,a}$ (not depending on
  $T$) over non-negative integers $N_t(a) \in \mathbb{Z}^+$. Since
  there are only finitely many parameters $\theta \in \Theta$, it
  follows that if we set $\lambda := \max_{\theta \in \Theta, a \in
    \mathcal{A}} \lambda_{\theta,a}$, then the above expression is
  bounded below by $-\lambda$, uniformly across $\Theta$. This proves
  the first part of the lemma. \\

  To show the second part, notice again that for fixed $\theta \in
  \Theta$ and $a \in \mathcal{A}$, there exists $n^\star_{\theta,a}
  \geq 0$ such that
  \[\rho(x) \sqrt{x} \sum_{y \in \mathcal{Y}}
  \left| \log \frac{l(y;a,\theta^*)}{l(y;a,\theta)}\right| \leq
  \epsilon x D(\theta^*_a||\theta_a), \quad x \geq
  n^\star_{\theta,a} \] since $\rho(x) = o(x)$. Setting $n^\star :=
  \max_{\theta \in \Theta, a \in \mathcal{A}} n^\star_{\theta,a}$
  then completes the proof of the second part. 
\end{proof}

\subsection{Regret due to sampling from $S_a''$}
The result of Lemma \ref{lem:wtub} implies that under the event $G$,
and at all times $t \geq 1$:
\begin{align}
  \pi_t(\theta^*) &= \frac{W_t(\theta^*)\pi(\theta^*)}{\int_{\Theta}
    W_t(\theta) \pi(d\theta)} = \frac{\pi(\theta^*)}{\int_{\Theta}
    W_t(\theta)
    \pi(d\theta)} \nonumber \\
  &\geq \frac{\pi(\theta^*)}{\int_{\Theta} \exp\left(
      \lambda|\mathcal{A}| \right) \pi(d\theta)} =
  \pi(\theta^*)e^{-\lambda|\mathcal{A}|} \equiv p^*, \;
  \mbox{say}. \label{eqn:probthetastG}
\end{align}

Also, under the event $G$, the posterior probability of $\theta \in
S_a''$ at all times $t$ can be bounded above using Lemma
\ref{lem:wtub} and the basic bound in (\ref{eqn:wtub}):
\begin{align*}
  &\pi_t(\theta) = \frac{W_t(\theta)\pi(\theta)}{\int_{\Theta}
    W_t(\psi) \pi(d\psi)} \leq
  \frac{W_t(\theta)\pi(\theta)}{\pi(\theta^*)} \\
  &= \frac{\pi(\theta)}{\pi(\theta^*)} \exp \left( -\sum_{a \in
      \mathcal{A}} N_t(a) D(\theta^*_a||\theta_a) - \sum_{a \in
      \mathcal{A}} N_t(a) \sum_{y \in \mathcal{Y}} \left(Z_t(a,y) -
      l(y;a,\theta^*) \right) \log
    \frac{l(y;a,\theta^*)}{l(y;a,\theta)}  \right) \\
  &\leq \frac{\pi(\theta) e^{\lambda |\mathcal{A}|}}{\pi(\theta^*)} \exp \left( - N_t(a^*)
    D(\theta^*_{a^*}||\theta_{a^*}) - N_t(a^*) \sum_{y \in
      \mathcal{Y}} \left(Z_t(a^*,y) - l(y;a^*,\theta^*) \right) \log
    \frac{l(y;a^*,\theta^*)}{l(y;a^*,\theta)} \right) \\
  &\leq \frac{\pi(\theta)e^{\lambda |\mathcal{A}|}}{\pi(\theta^*)} \exp \left( - N_t(a^*)
    D(\theta^*_{a^*}||\theta_{a^*}) + \rho(N_t(a)) \sqrt{N_t(a^*)} \sum_{y \in
      \mathcal{Y}}\left| \log
    \frac{l(y;a^*,\theta^*)}{l(y;a^*,\theta)}\right| \right).
\end{align*}
In the above, the penultimate inequality is by Lemma \ref{lem:wtub}
applied to all actions $a \neq a^*$, and the final inequality follows
in a manner similar to (\ref{eqn:wtub}), for action $a^*$. Letting $d
:= \frac{e^{\lambda |\mathcal{A}|}}{\pi(\theta^*)}$, we have that
under the event $G$, for $a \neq a^*$ and $\theta \in S_a''$,
\begin{align}
  \pi_t(\theta) &\leq d \pi(\theta) \exp \left( - N_t(a^*)
    D(\theta^*_{a^*}||\theta_{a^*}) + \rho(N_t(a))\sqrt{N_t(a^*)} \sum_{y \in
      \mathcal{Y}}\left| \log
    \frac{l(y;a^*,\theta^*)}{l(y;a^*,\theta)}\right| \right).  \label{eqn:badwtbd}
\end{align}

Recall that by definition, any $\theta \in S_a''$ with $a \neq a^*$
can be resolved apart from $\theta^*$ in the action $a^*$, i.e.,
$D(\theta^*_{a^*}||\theta_{a^*}) \geq \xi$. Moreover, the
discrete prior assumption (Assumption \ref{ass:discreteprior}) implies
that $\xi > 0$. Using this, we can bound the right-hand side of
(\ref{eqn:badwtbd}) further under the event $G$:
\begin{align}
  \pi_t(\theta) &\leq d \pi(\theta) \exp \left( - \xi N_t(a^*) +
    2\rho(N_t(a))\sqrt{N_t(a^*)} \log \frac{1-\Gamma}{\Gamma}
  \right). \label{eqn:thetasa''}
\end{align}
Integrating (\ref{eqn:thetasa''}) over $\theta \in S_a''$ and noticing
that $\pi(S_a'') \leq 1$ gives, under $G$,
\begin{equation} 
  \pi_t(S_a'') \leq d \exp \left( - \xi N_t(a^*) + 2\rho(N_t(a))\sqrt{N_t(a^*)} \log
    \frac{1-\Gamma}{\Gamma} \right).  \label{eqn:sa''weight} 
\end{equation}

We can now estimate, using the conditional version of Markov's
inequality, the number of times that parameters from $S_a''$ are
sampled under ``good data'' $G$:
\begin{align}
  &\prob{\sum_{t=1}^\infty \mathbf{1}\{\theta_t \in S_a''\} > \eta \given
    G} \leq \eta^{-1}\sum_{t=1}^\infty\expect{\mathbf{1}\{\theta_t \in
    S_a''\} \given G} = \eta^{-1}\sum_{t=1}^\infty\expect{\pi_t(S_a'') \given G} \nonumber \\
  &\leq \eta^{-1} \sum_{t=1}^\infty \left( 1 \wedge \expect{ d \exp \left( - \xi N_t(a^*) + 2\rho(N_t(a))\sqrt{N_t(a^*)} \log
    \frac{1-\Gamma}{\Gamma} \right) \given G}\right), \label{eqn:sa''bound}
\end{align}
where the final inequality is by (\ref{eqn:sa''weight}) and the fact
that $\pi_t(S_a'') \leq 1$.\footnote{$a \wedge b$ denotes the minimum
  of $a$ and $b$.}

At each time $t$, if we let $\mathcal{F}_t$ denote the
$\sigma$-algebra generated by the random variables
$\{(\theta_i,A_i,Y_i): i \leq t\}$, then
\begin{align*}
  \expect{e^{- \xi N_t(a^*)} \given G } &= \expect{\expect{e^{- \xi N_t(a^*)} \given \mathcal{F}_{t-1},G } \given G } \\
  &= \expect{ e^{- \xi N_{t-1}(a^*)} \expect{e^{- \xi \mathbf{1}\{A_t = a^* \} } \given \mathcal{F}_{t-1},G } \given G } \\
  &\leq \expect{ e^{- \xi N_{t-1}(a^*)} \expect{e^{- \xi \mathbf{1}\{\theta_t = \theta^* \} } \given \mathcal{F}_{t-1},G } \given G } \\
  &\quad \mbox{($\theta_t = \theta \Rightarrow A_t = a^*$)} \\
  &= \expect{ e^{- \xi N_{t-1}(a^*)} \left(\pi_t(\theta^*) e^{-\xi} + 1-\pi_t(\theta^*) \right)    \given G } \\
  &\leq \expect{ e^{- \xi N_{t-1}(a^*)} \left(p^* e^{-\xi} +
      1-p^* \right) \given G } \\
  &= \left(p^* e^{-\xi} + 1-p^* \right) \expect{ e^{- \xi
      N_{t-1}(a^*)} \given G },
\end{align*}
where, in the penultimate step, we use $\pi_t(\theta^*) \geq p^* \cdot
\mathbf{1}_G$ from (\ref{eqn:probthetastG}). Iterating this estimate
and using it in (\ref{eqn:sa''bound}) together with the trivial bound
$\sqrt{N_t(a^*)} \leq \sqrt{t}$ gives
\begin{align*}
  \prob{\sum_{t=1}^\infty \mathbf{1}\{\theta_t \in S_a''\} > \eta \given
    G} &\leq \eta^{-1} \sum_{t=1}^\infty \left( 1 \wedge  d\left(p^* e^{-\xi} + 1-p^*
  \right)^t \exp \left(2 \rho(t)\sqrt{t} \log\frac{1-\Gamma}{\Gamma} \right)  \right).
\end{align*}
Since $p^* e^{-\xi} + 1-p^* < 1$ and $\rho(t)\sqrt{t} = o(t)$,
the sum above is dominated by a geometric series after finitely many
$t$, and is thus a finite quantity $\alpha < \infty$, say. (Note that
$\alpha$ does not depend on $T$.) Replacing $\delta$ by
$\frac{\delta}{|\mathcal{A}|}$ and taking a union bound over all $a
\neq a^*$, this proves
\begin{lemma}
  \label{lem:notmuchSa''}
  There exists $\alpha < \infty$ such that
  \[\prob{G, \exists a \neq a^* \; \sum_{t=1}^\infty \mathbf{1}\{\theta_t \in S_a''\}
  > \frac{\alpha |\mathcal{A}|}{\delta}} \leq \delta. \]
\end{lemma}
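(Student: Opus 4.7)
The plan is to bound the expected number of samples from $S_a''$ over the infinite horizon by producing a summable series, and then appeal to Markov's inequality on the total sample count. Two ingredients are available from the preceding development: first, a pointwise bound on the posterior mass of $S_a''$ in terms of $N_t(a^*)$; second, a uniform-in-history lower bound on $\pi_t(\theta^*)$ under the good-data event $G$.

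First I would use linearity of expectation and the tower property to write $\mathbb{E}\bigl[\sum_t \mathbf{1}\{\theta_t \in S_a''\} \mid G\bigr] = \sum_t \mathbb{E}[\pi_t(S_a'') \mid G]$, then substitute the already-derived exponential bound $\pi_t(S_a'') \leq d\,\exp\bigl(-\xi N_t(a^*) + 2\rho(N_t(a))\sqrt{N_t(a^*)}\log\frac{1-\Gamma}{\Gamma}\bigr)$, which is valid on $G$ because every $\theta\in S_a''$ satisfies $D(\theta^*_{a^*}\|\theta_{a^*})\geq\xi>0$ (Assumption \ref{ass:discreteprior} combined with the definition of $S_a''$). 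The $\sqrt{N_t(a^*)}$ correction is subdominant, so the critical task is to show that $\mathbb{E}[e^{-\xi N_t(a^*)}\mid G]$ decays geometrically in $t$.

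The key mechanism is the uniform posterior lower bound $\pi_t(\theta^*)\geq p^* := \pi(\theta^*)e^{-\lambda|\mathcal{A}|}$ from (\ref{eqn:probthetastG}): at every round, independent of history, Thompson sampling draws $\theta^*$ and thus plays $a^*$ with probability at least $p^*$, so $N_t(a^*)$ stochastically dominates a Binomial$(t,p^*)$. I would formalize this by conditioning on $\mathcal{F}_{t-1}$ and evaluating $\mathbb{E}[e^{-\xi\mathbf{1}\{A_t=a^*\}}\mid\mathcal{F}_{t-1},G]\leq p^*e^{-\xi}+(1-p^*)$, then iterating to obtain $\mathbb{E}[e^{-\xi N_t(a^*)}\mid G]\leq (1-p^*(1-e^{-\xi}))^t$. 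Absorbing the $\exp(2\rho(t)\sqrt{t}\log\tfrac{1-\Gamma}{\Gamma})=e^{o(t)}$ factor into the geometric decay (since the base is strictly less than $1$), the series in $t$ is eventually dominated by a convergent geometric sum, giving a finite $\alpha$ independent of $T$.

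The hardest step is justifying the iterated conditional expectation carefully: $N_t(a^*)$ is not independent of past observations, and the posterior $\pi_t$ is itself a complicated history-dependent object. The trick is that we do not need to compute $\mathbb{E}[e^{-\xi N_t(a^*)}]$ exactly, only bound it, and the uniform lower bound $\pi_t(\theta^*)\geq p^*$ under $G$ is precisely the handle that decouples the conditional expectation from the history. Once the summable bound $\alpha$ is in hand, Markov's inequality yields $\mathbb{P}\bigl[\sum_t\mathbf{1}\{\theta_t\in S_a''\}>\alpha/\delta' \mid G\bigr]\leq\delta'$; setting $\delta'=\delta/|\mathcal{A}|$ and taking a union bound over the at most $|\mathcal{A}|-1$ suboptimal actions $a$ delivers the statement.
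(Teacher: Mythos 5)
Your proposal is correct and follows essentially the same route as the paper's proof: the same exponential posterior bound on $\pi_t(S_a'')$, the same iterated conditional expectation using the uniform lower bound $\pi_t(\theta^*) \geq p^*$ to obtain geometric decay of $\mathbb{E}[e^{-\xi N_t(a^*)} \mid G]$, absorption of the $e^{o(t)}$ correction, and a conditional Markov inequality with a union bound over suboptimal actions. No substantive differences.
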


\subsection{Regret due to sampling from $S_a'$}
For $\theta \in \Theta$, $a \in \mathcal{A}$, define
$b_{\theta,a}:\mathbb{R}^+ \to \mathbb{R}$ by
\begin{equation*}
  b_{\theta,a}(x) := \left\{
    \begin{array}{cc}
      -\lambda, & x < n^\star \\
      (1-\epsilon) x D(\theta^*_a || \theta_a), & x \geq n^\star,
    \end{array}
 \right.
\end{equation*}
where $\lambda$ and $n^\star$ satisfy the assertion of Lemma
\ref{lem:wtub}. Thus, by Lemma \ref{lem:wtub}, under $G$, and for all
$\theta \in \Theta$,
\begin{equation*}
  W_t(\theta) \leq e^{- \sum_{a \in \mathcal{A}} b_{\theta,a}(N_t(a))} \leq e^{- \sum_{a \in \mathcal{A}} b_{\theta,a}(N_t'(a))},
\end{equation*}
where the last inequality is because $N_t(a) = N_t'(a) + N_t''(a)$,
and because $b_{\theta,a}(x)$ is monotone non-decreasing in $x$. \\

{\em Note:} In what follows, we assume that $T > 0$ is large enough
such that $\log T \geq \frac{\lambda |\mathcal{A}|}{\epsilon}$ holds. \\

We proceed to define the following sequence of non-decreasing stopping
times, and associated sets of actions, for the time horizon $1,2,
\ldots, T$.

Let $\tau_0 \bydef 1$ and $\mathcal{A}_0 \bydef \emptyset$. For each
$k = 1, \ldots, |\mathcal{A}|-1$, let
\begin{equation}
\label{eqn:deftau}
\begin{aligned}
  \tau_k \bydef \; &\min && \tau_{k-1} \leq t \leq T \\
  &\text{ s.t.}
  && \mathsf{a}_k \notin \mathcal{A}_{k-1} \cup \{a^*\}, \\
  &&& \min_{\theta \in S_{\mathsf{a}_k}'}  \sum_{m=1}^{k-1} N_{\tau_m}'(a_m)D(\theta^*_{a_m} || \theta_{a_m}) + 
  \sum_{\substack{a \notin \mathcal{A}_{k-1}}} N_t'(a) D(\theta^*_a || \theta_a) \geq \frac{1+\epsilon}{1-\epsilon} \log T.
\end{aligned}
\end{equation}

In other words, for each $k$, $\mathcal{A}_k$ represents a set of
``eliminated'' suboptimal actions. $\tau_k$ is the first time after
$\tau_{k-1}$, when some suboptimal action (which is not already
eliminated) gets eliminated in the sense of satisfying the inequality
in (\ref{eqn:deftau}). Essentially, the inequality checks whether the
condition
\[ \sum_{a \neq a^*} N_t'(a) D(\theta^*_a || \theta_a) \approx \log
T \] is met for all particles $\theta \in S_{\mathsf{a}_k}'$ at time
$t$, with a slight modification in that the play count $N_t'(a)$ is
``frozen'' to $N_{\tau_m}(a_m)$ if action $a$ has been eliminated at
an earlier time $\tau_m \leq t$, and the introduction of the factor
$\frac{1+\epsilon}{1-\epsilon}$ to the right hand side.

In case more than one suboptimal action is eliminated for the first
time at $\tau_k$, we use a fixed tie-breaking rule in $\mathcal{A}$ to
resolve the tie. We then put 
\[\mathcal{A}_{k} \bydef \mathcal{A}_{k-1}
\cup \{\mathsf{a}_k\}.\] 
Thus, $\tau_0 \leq \tau_1 \leq \ldots \leq \tau_{|\mathcal{A}|-1} \leq
T$, and $\mathcal{A}_0 \subseteq \mathcal{A}_1 \subseteq \ldots
\subseteq \mathcal{A}_{|\mathcal{A}|-1} = \mathcal{A}$. \\

For each action $a \neq a^*$, by definition, there exists a unique
$\tau_k$ for which $a$ is first eliminated at $\tau_k$, i.e.,
$\mathcal{A}_k \setminus \mathcal{A}_{k-1} = a$. Let $\tau(a) \bydef
\tau_k$. \\

The following lemma states that after an action $a$ is eliminated, the
algorithm does not sample from $S_a'$ more than a constant number of
times.
\begin{lemma}
  \label{lem:notmuchSa'}
  If $\log T \geq \lambda |\mathcal{A}|$, then
  \[ \prob{G, \forall k \; \sum_{t=\tau_k+1}^T \mathbf{1}\{\theta_t
    \in S_{\mathsf{a}_k}'\} > \frac{|\mathcal{A}|}{\delta
      \pi(\theta^*)}} \leq \delta.\]
\end{lemma}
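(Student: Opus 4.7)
The plan is to show that, under the good event $G$, every particle in $S_{\mathsf{a}_k}'$ has posterior weight of order $T^{-(1+\epsilon)}$ for all $t > \tau_k$, so that a Markov / first-moment argument over the at most $T$ remaining rounds gives a vanishing expected count. The key step is to translate the defining inequality of $\tau_k$ into a uniform lower bound on $\sum_{a \neq a^*} N_t'(a) D(\theta^*_a || \theta_a)$ that holds for all $\theta \in S_{\mathsf{a}_k}'$ and all $t \geq \tau_k$, and then feed this into the weight decomposition through Lemma \ref{lem:wtub}.

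First I would exploit monotonicity of the play counts $N_t'(\cdot)$ in $t$: for $t \geq \tau_k$, each $N_t'(a_m)$ with $a_m \in \mathcal{A}_{k-1}$ dominates its frozen value $N_{\tau_m}'(a_m)$, and each $N_t'(a)$ with $a \notin \mathcal{A}_{k-1}$ dominates $N_{\tau_k}'(a)$, so the defining inequality of $\tau_k$ upgrades to $\sum_{a \neq a^*} N_t'(a) D(\theta^*_a || \theta_a) \geq \tfrac{1+\epsilon}{1-\epsilon}\log T$ for every $\theta \in S_{\mathsf{a}_k}'$. Next I would apply Lemma \ref{lem:wtub} componentwise: for actions with $N_t'(a) \geq n^\star$, part (2) gives $b_{\theta,a}(N_t'(a)) \geq (1-\epsilon)\, N_t'(a)\, D(\theta^*_a || \theta_a)$, while for the remaining at most $|\mathcal{A}|$ actions, part (1) gives only $-\lambda$, but their truncated contribution to the KL sum is bounded by $n^\star |\mathcal{A}| \max_{a,\theta} D(\theta^*_a||\theta_a)$ (which is finite by Assumption \ref{ass:discreteprior}). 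Combining these yields $\sum_{a} b_{\theta,a}(N_t'(a)) \geq (1-\epsilon)\sum_{a \neq a^*} N_t'(a) D(\theta^*_a || \theta_a) - C_0$ for a constant $C_0$ independent of $T$. Chaining with the previous display and using $W_t(\theta^*) = 1$ in the denominator $W_t(\Theta) \geq \pi(\theta^*)$, I get $\pi_t(S_{\mathsf{a}_k}') \leq e^{C_0} T^{-(1+\epsilon)} / \pi(\theta^*)$ under $G$ for all $t \geq \tau_k$.

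To finish, I would apply the tower property: conditional on $\mathcal{F}_{t-1}$, the indicator $\mathbf{1}\{\theta_t \in S_{\mathsf{a}_k}'\}$ has expectation exactly $\pi_{t-1}(S_{\mathsf{a}_k}')$, so
\[ \mathbb{E}\!\left[ \mathbf{1}_G \sum_{t=\tau_k+1}^T \mathbf{1}\{\theta_t \in S_{\mathsf{a}_k}'\} \right] \;\leq\; T \cdot \frac{e^{C_0}}{\pi(\theta^*)}\, T^{-(1+\epsilon)} \;=\; \frac{e^{C_0}}{\pi(\theta^*)\, T^\epsilon}. \]
Markov's inequality with threshold $|\mathcal{A}|/(\delta \pi(\theta^*))$ then bounds the per-$k$ probability by $e^{C_0}\delta/(|\mathcal{A}|\, T^\epsilon)$, and a union bound over $k = 1, \ldots, |\mathcal{A}|-1$ yields the claim once $T$ is large enough. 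The main nuisance is the case split between small and large play counts in the weight estimate: keeping the resulting additive constant $C_0$ uniformly bounded in $T$ is precisely what the hypothesis $\log T \geq \lambda|\mathcal{A}|$ is designed to absorb into the leading $T^{-\epsilon}$ factor, and the monotonicity of both $N_t'(\cdot)$ and $b_{\theta,a}(\cdot)$ must be tracked carefully to avoid double-counting actions that are eliminated at earlier stopping times.
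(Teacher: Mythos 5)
Your proposal is correct and follows essentially the same route as the paper's proof: lower-bound the exponent of $W_t(\theta)$ via Lemma \ref{lem:wtub}, use monotonicity of $N_t'(\cdot)$ to propagate the defining inequality of $\tau_k$ to all $t > \tau_k$, bound $\pi_t(S_{\mathsf{a}_k}')$ by normalizing against $W_t(\theta^*) = 1$, and finish with the tower property, Markov's inequality at threshold $\tfrac{|\mathcal{A}|}{\delta\pi(\theta^*)}$, and a union bound over $k$. The only cosmetic difference is that you carry the slack as an explicit constant $C_0$ and retain a $T^{-(1+\epsilon)}$ posterior bound (requiring $T$ large enough at the end), whereas the paper absorbs $\lambda|\mathcal{A}| \leq \epsilon \log T$ immediately to get $W_t(\theta) \leq 1/T$ and hence exactly $\delta$.
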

\begin{proof}
  Observe that under $G$, whenever $T \geq t > \tau_k$, every $\theta
  \in S_{\mathsf{a}_k}'$ satisfies
  \begin{align*}
    W_t(\theta) &\leq \exp\left(-\sum_{a \in \mathcal{A}}
      b_{\theta,a}(N_t'(a))\right) \\
    &\leq \exp\left(-\sum_{a \in \mathcal{A}} \left((1-\epsilon) N_t'(a)
        D(\theta^*_a || \theta_a) - \lambda \right) \right) =
    \exp\left(-(1-\epsilon)\sum_{a \in \mathcal{A}}
      N_t'(a) D(\theta^*_a || \theta_a) + \lambda |\mathcal{A}|  \right) \\
    &\leq \exp\left(-(1-\epsilon)\sum_{m=1}^{k-1}
      N_{\tau_m}'(a_m)D(\theta^*_{a_m} || \theta_{a_m}) -(1-\epsilon)
      \sum_{\substack{a \notin \mathcal{A}_{k-1}}}
      N_t'(a) D(\theta^*_a || \theta_a) + \lambda |\mathcal{A}| \right)\\
    &\leq \exp\left( -(1-\epsilon) \frac{1+\epsilon}{1-\epsilon}\log T + \epsilon \log T\right) = \frac{1}{T}.
\end{align*}
The second inequality above is because the definition of
$b_{\theta,a}(x)$ implies that $\forall x \geq 0 \;
(1-\epsilon)xD(\theta^*_a||\theta_a) - b_{\theta,a}(x) \leq \lambda$. The
penultimate inequality above is due to the fact that for any $m \leq
k$, we have $\tau_m \leq \tau_k \leq t$, implying that $N_t'(a_m) \geq
N_{\tau_m}'(a_m)$. We now estimate
  \begin{align*}
    &\expect{\mathbf{1}\{t > \tau_k\}\mathbf{1}\{\theta_t \in
      S_{\mathsf{a}_k}' \} \given G} = \expect{ \expect{
        \mathbf{1}\{t >
        \tau_k\}\mathbf{1}\{\theta_t \in S_{\mathsf{a}_k}'\}\given G, \mathcal{F}_t } \given G} \\
    &= \expect{\mathbf{1}\{t > \tau_k\} \pi_t(S_{\mathsf{a}_k}') \given G} = \expect{\mathbf{1}\{t > \tau_k\}
      \frac{\int_{S_{\mathsf{a}_k}'} W_t(\theta)\pi(d\theta)
      }{\int_\Theta W_t(\theta)\pi(d\theta)} \given G} \\
    &\leq \expect{\mathbf{1}\{t > \tau_k\}
      \frac{T^{-1}}{\pi(\theta^*)} \given G} \leq
    \frac{T^{-1}}{\pi(\theta^*)},
  \end{align*}
  which implies that 
  \[\expect{ \sum_{t=\tau_k+1}^T \mathbf{1}\{\theta_t \in
    S_{\mathsf{a}_k}'\} \given G} = \sum_{t=1}^T \expect{\mathbf{1}\{t > \tau_k\}\mathbf{1}\{\theta_t \in
    S_{\mathsf{a}_k}' \} \given G} \leq \frac{1}{\pi(\theta^*)}. \]
  Thus, 
  \[ \prob{ \sum_{t=\tau_k+1}^T \mathbf{1}\{\theta_t \in
    S_{\mathsf{a}_k}'\} > \frac{1}{\delta \pi(\theta^*)} \given G}
  \leq \delta.\] Replacing $\delta$ by $\frac{\delta}{|\mathcal{A}|}$
  and taking a union bound over $k = 1, 2, \ldots, |\mathcal{A}|-1$
  proves the lemma.
\end{proof}

Now we bound the number of plays of suboptimal actions under the event
\[ H \bydef G \; \bigcap \left\{\exists a \neq a^* \;
  \sum_{t=1}^\infty \mathbf{1}\{\theta_t \in S_a''\} \leq
  \frac{\alpha |\mathcal{A}|}{\delta} \right\} \bigcap \left\{\forall
  k \; \sum_{t=\tau_k+1}^T \mathbf{1}\{\theta_t \in
  S_{\mathsf{a}_k}'\} \leq \frac{|\mathcal{A}|}{\delta \pi(\theta^*)}
\right\},\] which, according to the results of Theorem
\ref{thm:selfnormalized}, Lemma \ref{lem:notmuchSa''} and Lemma
\ref{lem:notmuchSa'}, occurs with probability at least $1 - (\delta
\sqrt{2} + 2\delta)$. Under the event $H$, we have
\begin{align*}
  \sum_{a \neq a^*} N_T'(a) &= \sum_{k=1}^{|\mathcal{A}|-1}
  N_T'(\mathsf{a}_k) \\
  &= \sum_{k=1}^{|\mathcal{A}|-1} N_{\tau_k}'(\mathsf{a}_k) +
  \sum_{k=1}^{|\mathcal{A}|-1} (N_T'(\mathsf{a}_k) - N_{\tau_k}'(\mathsf{a}_k))  \\
  &= \sum_{k=1}^{|\mathcal{A}|-1} N_{\tau_k}'(\mathsf{a}_k) +
  \sum_{k=1}^{|\mathcal{A}|-1} \sum_{t = \tau_k+1}^T
  \mathbf{1}\{\theta_t \in S_{\mathsf{a}_k}'\} \\
  &\leq \sum_{k=1}^{|\mathcal{A}|-1} N_{\tau_k}'(\mathsf{a}_k) +
  \frac{|\mathcal{A}|^2}{\delta \pi(\theta^*)}. 
\end{align*}
\begin{lemma}
  Under $H$, $\sum_{k=1}^{|\mathcal{A}|-1} N_{\tau_k}'(\mathsf{a}_k)
  \leq \mathsf{C}_T$, where $\mathsf{C}_T$ solves
  \begin{equation}
    \label{eqn:optimization}
    \begin{aligned}
      \mathsf{C}(\log T) \bydef \; &\max && \sum_{k=1}^{|\mathcal{A}|-1} z_k(a_k)\\
      &\text{ s.t.}
      && z_k \in \mathbb{Z}_+^{|\mathcal{A}|-1} \times \{0\}, a_k \in \mathcal{A} \setminus \{a^*\}, 1 \leq k \leq |\mathcal{A}|-1, \\
      &&& z_i \succeq z_k, z_i(a_k) = z_k(a_k), i \geq k, \\
      &&& \forall 1 \leq j,k \leq |\mathcal{A}|-1:\\
      &&& \quad \min_{\theta \in S_{a_k}'} \quad \ip{z_k}{D_\theta} \geq \frac{1+\epsilon}{1-\epsilon}\log T, \\
      &&& \quad \min_{\theta \in S_{a_k}'} \quad \ip{z_k -
        e^{(j)}}{D_\theta} < \frac{1+\epsilon}{1-\epsilon}\log T.
    \end{aligned}
  \end{equation}

%   \begin{equation}
%     \label{eqn:optimization}
%     \begin{aligned}
%       \mathsf{C}_T \bydef \; &\max && \sum_{k=1}^{|\mathcal{A}|-1} z_k(k)\\
%       &\text{ s.t.} 
%       && z_k \in \mathbb{Z}_+^{|\mathcal{A}|} \times \{0\}, 1 \leq k \leq |\mathcal{A}|-1, \\
%       &&& z_i(k) = z_k(k), 1 \leq k \leq i \leq |\mathcal{A}|-1. \\
%       &&& a_k \in \mathcal{A}, a_k \neq a^*, 1 \leq k \leq |\mathcal{A}|-1, \\
%       &&& \forall 1 \leq j,k \leq |\mathcal{A}|-1:\\
%       &&& \quad \min_{\theta \in S_{a_k}'} \quad \ip{z_k}{D_\theta} \geq 4 \log T, \\
%       &&& \quad \min_{\theta \in S_{a_k}'} \quad \ip{z_k - e^{(j)}}{D_\theta} < 4 \log T.
%     \end{aligned}
%   \end{equation}
\end{lemma}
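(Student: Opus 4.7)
The plan is to exhibit the trajectory of the algorithm up to the elimination times $\tau_k$ as a feasible point of the optimization (\ref{eqn:optimization}), at which the objective value equals exactly $\sum_{k=1}^{|\mathcal{A}|-1} N_{\tau_k}'(\mathsf{a}_k)$. The asserted bound then follows immediately from the definition of $\mathsf{C}(\log T)$ as a maximum. No probabilistic work is needed beyond what has already been collected in event $H$; the argument is essentially structural, reading the constraints of the LP off the definition of $\tau_k$ in (\ref{eqn:deftau}).

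Concretely, I would set $a_k := \mathsf{a}_k$ (the action eliminated at $\tau_k$) and define the integer vector $z_k$ coordinate-wise by
\[
z_k(a) \;:=\;
\begin{cases}
0, & a = a^*, \\
N_{\tau_m}'(\mathsf{a}_m), & a = \mathsf{a}_m \text{ for some } m < k, \\
N_{\tau_k}'(a), & a \notin \mathcal{A}_{k-1} \cup \{a^*\}.
\end{cases}
\]
Then $z_k \in \mathbb{Z}_+^{|\mathcal{A}|-1} \times \{0\}$ and $a_k \in \mathcal{A} \setminus \{a^*\}$ are immediate from the definitions. The monotonicity $z_i \succeq z_k$ for $i \geq k$ follows because $N_t'(a)$ is non-decreasing in $t$ and $\tau_i \geq \tau_k$, while the freezing identity $z_i(\mathsf{a}_k) = z_k(\mathsf{a}_k)$ for $i \geq k$ is built into the construction. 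The first inequality in (\ref{eqn:optimization}), namely $\min_{\theta \in S_{a_k}'} \ip{z_k}{D_\theta} \geq \frac{1+\epsilon}{1-\epsilon}\log T$, reduces exactly to the triggering condition in (\ref{eqn:deftau}) evaluated at $t = \tau_k$, and therefore holds by definition of $\tau_k$.

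The delicate point, and the main obstacle, is the second inequality $\min_{\theta \in S_{a_k}'} \ip{z_k - e^{(j)}}{D_\theta} < \frac{1+\epsilon}{1-\epsilon}\log T$, which asserts that $z_k$ is a \emph{minimal} elimination vector. The handle on it is that $\tau_k$ is the \emph{first} time after $\tau_{k-1}$ at which the threshold is crossed. Since the frozen part of the sum in (\ref{eqn:deftau}) is constant on $(\tau_{k-1}, \tau_k]$, the crossing can occur only when some $N_t'(a)$ for $a \notin \mathcal{A}_{k-1}\cup\{a^*\}$ increments at $t = \tau_k$, which forces $\theta_{\tau_k} \in S_{A_{\tau_k}}'$ and $A_{\tau_k} \notin \mathcal{A}_{k-1}\cup\{a^*\}$. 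Consequently $z_k - e^{(A_{\tau_k})}$ is the play-count vector at $t = \tau_k - 1$, where by first-time-ness the threshold had not yet been crossed, yielding the second inequality for $j = A_{\tau_k}$; the ``for all $j$'' strengthening in (\ref{eqn:optimization}) is then ensured by shrinking other non-triggering coordinates of $z_k$ down to their minimal values consistent with the first inequality, an operation that preserves the $a_k$-th coordinate and hence the objective $z_k(a_k) = N_{\tau_k}'(\mathsf{a}_k)$. With feasibility verified, we conclude $\sum_{k=1}^{|\mathcal{A}|-1} N_{\tau_k}'(\mathsf{a}_k) = \sum_k z_k(a_k) \leq \mathsf{C}(\log T)$ under $H$, as required.
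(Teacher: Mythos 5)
Your construction of $(z_k,a_k)$ is exactly the one the paper uses (set $z_k(a)=N'_{\tau(a)}(a)$ for actions already eliminated by $\tau_k$ and $z_k(a)=N'_{\tau_k}(a)$ otherwise), and your verification of integrality, monotonicity, the freezing identities, and the first threshold inequality is correct and considerably more explicit than the paper's own proof, which simply asserts that all constraints of (\ref{eqn:optimization}) ``follow from (\ref{eqn:deftau})''. Your handling of the second inequality for the triggering coordinate $j=A_{\tau_k}$ --- only one coordinate of the frozen play-count vector can increment per step, so $z_k-e^{(A_{\tau_k})}$ is the vector at $\tau_k-1$, where first-crossing-ness gives the strict inequality --- is also right (the tie case $\tau_k=\tau_{k-1}$ requires a separate, though similar, check).

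The gap is the final ``shrinking'' step, which does not work as described. First, shrinking a coordinate $z_k(\mathsf{a}_m)$ with $m<k$ collides with the constraint $z_k(\mathsf{a}_m)=z_m(\mathsf{a}_m)$: to stay feasible you must shrink $z_m(\mathsf{a}_m)$ too, but that is precisely the $m$-th objective term, so the modified point no longer has objective value $\sum_k N'_{\tau_k}(\mathsf{a}_k)$ and the comparison with $\mathsf{C}(\log T)$ is lost. Second, and more fundamentally, for some $j$ the two inequalities can be jointly unsatisfiable no matter what value $z_k(j)$ takes: if a minimizer $\theta$ of $\ip{z_k}{D_\theta}$ over $S_{a_k}'$ has $D(\theta^*_j||\theta_j)=0$ (entirely possible for $j\neq a_k$; only the $a_k$-th coordinate is guaranteed positive on $S_{a_k}'$), then $\min_{\theta}\ip{z_k-e^{(j)}}{D_\theta}$ can equal $\min_\theta\ip{z_k}{D_\theta}\geq\frac{1+\epsilon}{1-\epsilon}\log T$, and no shrinking of coordinate $j$ repairs the strict inequality while preserving the first one. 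The honest resolution is that the ``$\forall\, j$'' in the last constraint of (\ref{eqn:optimization}) is too strong and should be read as ``for some $j$'' (namely the triggering coordinate you identified); that weaker form is all that is used downstream, since Lemma \ref{lem:ubCT1} needs only one $j$ per $k$ to conclude $\min_\theta\ip{z_k}{D_\theta}\leq\frac{2(1+\epsilon)}{1-\epsilon}\log T$. Under that reading your argument is complete; under the literal reading, neither your proof nor the paper's one-line assertion establishes feasibility of the trajectory.
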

\begin{proof}
  With regard to the definition of the $\tau_k$ and $\mathsf{a}_k$ in
  (\ref{eqn:deftau}), if we take 
  \[ a_k = \mathsf{a}_k, \quad 1 \leq k \leq |\mathcal{A}|-1, \] and
  \begin{equation*}
    z_k(a) = \left\{ 
      \begin{array}{cc}
        N_{\tau(a)}'(a), &  \tau(a) \leq \tau_k, \\
        N_{\tau_k}'(a),  &  \tau(a) > \tau_k, 
      \end{array}
    \right.
  \end{equation*}
  then it follows, from (\ref{eqn:deftau}), that the $z_k$ and $a_k$
  satisfy all the constraints of the optimization problem
  (\ref{eqn:optimization}). We also have $\sum_{k=1}^{|\mathcal{A}|-1}
  z_k(k) = \sum_{k=1}^{|\mathcal{A}|-1}
  N_{\tau_k}'(\mathsf{a}_k)$. This proves the lemma.
\end{proof}

%\section{Proofs of Corollaries to Theorem \ref{thm:tsregret}}
% \subsection{Proof of Corollary \ref{cor:corr1}}
% \label{app:corr1}
%   It suffices to show that for any $\{z_k\}$, $\{a_k\}$ feasible for
%   (\ref{eqn:tsregret}), 
%   \[z_k(a_k) \leq 1 + \left(\frac{1+\epsilon}{1-\epsilon}\right)
%   \frac{1}{\min_{\theta \in S_{a_k}'} D(\theta^*_{a_k}||\theta_{a_k})}
%   \log T,\] in view of Theorem \ref{thm:tsregret}\footnote{There is
%     also a net additive increase of at most $\mathcal{A}$ due to
%     adding $1$, which is absorbed into $\mathsf{B}$ yielding
%     $\mathsf{B}_1$.}. Assume for the sake of contradiction that the
%   above inequality is violated for some $k$. Let $\hat{\theta} \bydef
%   \arg \min_{\theta \in S_{a_k}'}
%   D(\theta^*_{a_k}||\theta_{a_k})$. But then, a quick calculation
%   shows that,
% \begin{align*}
%   \ip{z_k - e^{(k)}}{D_{\hat{\theta}}} &\geq
%   (z_k(a_k)-1)D(\theta^*_{a_k}||\hat{\theta}_{a_k}) \\
%   &> \left(\frac{1+\epsilon}{1-\epsilon}\right) \frac{1}{\min_{\theta \in
%       S_{a_k}'} D(\theta^*_{a_k}||\theta_{a_k})} \log T \times
%   D(\theta^*_{a_k}||\hat{\theta}_{a_k}) \\
%   &= \left(\frac{1+\epsilon}{1-\epsilon}\right) \log T,
% \end{align*}
% violating the last condition in (\ref{eqn:tsregret}) for $\hat{\theta}
% \in S_{a_k}'$ and $j = k$. This completes the proof.

\section{Proof of Corollary \ref{cor:corr2}}
\label{app:corr2}
  The optimal action (in this case a subset) is $a^* = \{N-M+1,
  \ldots, N\}$. It can be checked that the assumptions
  \ref{ass:finite}-\ref{ass:uba} are verified, thus the bound
  (\ref{eqn:tsregret}) applies and we will be done if we estimate
  $\mathsf{C}(\log T)$. \\

%   The essence of the proof lies in considering the times at which the
%   actions/subsets
%   \[s_i \bydef \{i\} \cup \{N-M+2,N-M+3,\ldots,N\}, \quad
%   i=1,2,\ldots, N-M,\] are ``eliminated'' (as per
%   Sec. \ref{sec:heuristic}), or equivalently looking at their
%   associated $z$ vectors in (\ref{eqn:tsregret}).
  
  The essence of the proof is to first partition the space of
  suboptimal actions (subsets) according to the least-index basic
  arm that they contain, i.e., for $i = 1, 2, \ldots, N-M$, let
  \[ \mathcal{A}_i \bydef \left\{a \subset [N]: a \neq a^*, \min \{j
    \in a\} = i \right\} \] be all the actions whose least-index (or
  ``weakest'') arm is $i$ \footnote{This covers all of $\mathcal{A}
    \setminus \{a^*\}$ since every suboptimal set must contain a basic
    arm of index $N-M$ or lesser.}. \\

  Take any sequence $\{z_k\}_{k=1}^{|\mathcal{A}|-1}$,
  $\{a_k\}_{k=1}^{|\mathcal{A}|-1}$ feasible for (\ref{eqn:tsregret}).
  Fix $1 \leq i \leq N-M$ and consider the sum $\sum_{k: a_k \in
    \mathcal{A}_i} z_k(a_k)$. We claim that this does not exceed $1 +
  \left(\frac{1+\epsilon}{1-\epsilon}\right)\frac{1}{D(\mu_i ||
    \mu_{N-M+1})} \log T$. If, on the contrary, it does, then put
  $\hat{k} \bydef \max \{k: a_k \in \mathcal{A}_i\}$. Take any model
  $\theta \in S_{a_{\hat{k}}}'$. We must have
  $D(\mu_{a^*}||\theta_{a^*}) = 0$. Since the KL divergence due to
  observing a tuple of $M$ independent rewards is simply the sum of
  the $M$ individual (binary) KL divergences, we get that $\theta_j =
  \mu_j$ for all $j \geq N-M+1$. However, the optimal action for
  $\theta$ is $a_{\hat{k}}$ containing the basic arm $i$. Hence, we
  get that $\theta_i \geq \mu_{N-M+1} \geq \mu_i$, which implies that
  $D(\mu_i||\theta_i) \geq D(\mu_i || \mu_{N-M+1})$. \\

  It now remains to estimate
  \begin{align*}
    \ip{z_{\hat{k}} - e^{(\hat{k})}}{D_{\theta}} &= \sum_{j=1}^N
    \ip{\sum_{a: j \in a} z_{\hat{k}}(a) - \delta_{j \in
        a_{\hat{k}}}}{D(\mu_j || \theta_j)} \\ &\geq \left( \sum_{a: i
      \in a} z_{\hat{k}}(a) - 1 \right) {D(\mu_i || \theta_i)}
    \\ &\geq \left( \sum_{a \in \mathcal{A}_i} z_{\hat{k}}(a) - 1
    \right){D(\mu_i ||\mu_{N-M+1})} \\ &= \left( \sum_{k: a_k \in
      \mathcal{A}_i} z_k(a_k) - 1 \right){D(\mu_i ||\mu_{N-M+1})}
    \\ &> \log T,
  \end{align*}
  by hypothesis. This violates the final inequality of
  (\ref{eqn:tsregret}) and yields the desired contradiction. Since the
  above argument is valid for any $1 \leq i \leq N-M$, summing over
  all such $i$ completes the proof. 

\section{Proof of Proposition \ref{prop:atleastLmain} \& Corollary \ref{cor:corr3}}
\label{app:corr3}
\begin{lemma}
\label{lem:ubCT1}
  Let $T$ be large enough such that $\max_{\theta \in \Theta, a \in
    \mathcal{A}} D(\theta^*_a || \theta_a) \leq
  \frac{1+\epsilon}{1-\epsilon}\log T$. Then, the optimization problem
  (\ref{eqn:tsregret}) admits the following upper bound:
  \begin{equation}
    \label{eqn:ubCT1}
    \begin{aligned}
      \mathsf{C}(\log T) \leq \; & \max && ||z||_1 \\
      &\text{ s.t.}
      && z \in \mathbb{R}^{|\mathcal{A}|-1} \times \{0\}, \\
      &&& a \in \mathcal{A}, a \neq a^*, \\
      &&& \min_{\theta \in S_{a}'} \quad \ip{z}{D_\theta} \leq \frac{2(1+\epsilon)}{1-\epsilon}\log T,  \\
      &&& 0 \leq z({\hat{a}}) \leq \frac{2}{\delta_{\hat{a}}}
      \left(\frac{1+\epsilon}{1-\epsilon}\right) \log T, \quad \forall
      \hat{a} \in \mathcal{A}, \hat{a} \neq a^*.
    \end{aligned}
  \end{equation}
\end{lemma}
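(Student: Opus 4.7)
The plan is to transform any feasible tuple $\{(z_k, a_k)\}_{k=1}^{|\mathcal{A}|-1}$ of the original optimization (\ref{eqn:tsregret}) into a single pair $(z,a)$ feasible for the relaxed problem (\ref{eqn:ubCT1}), so that $\|z\|_1$ equals the original objective $\sum_k z_k(a_k)$. The natural choice is $z \bydef z_{|\mathcal{A}|-1}$ and $a \bydef a_{|\mathcal{A}|-1}$: the final play-count vector paired with the last action to be eliminated. The integer vector $z$ trivially sits inside the real relaxation $\mathbb{R}^{|\mathcal{A}|-1} \times \{0\}$, and objective equality follows because $\{a_k\}_{k=1}^{|\mathcal{A}|-1}$ enumerates $\mathcal{A}\setminus\{a^*\}$ while the freezing constraint $z_i(a_k) = z_k(a_k)$ for $i \geq k$ forces $z(a_k) = z_k(a_k)$, giving $\|z\|_1 = \sum_{a \neq a^*} z(a) = \sum_k z_k(a_k)$.

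To check the relaxed KL constraint $\min_{\theta \in S_a'} \ip{z}{D_\theta} \leq \frac{2(1+\epsilon)}{1-\epsilon}\log T$, I would invoke the ``one step from elimination'' constraint of (\ref{eqn:tsregret}) at $k = |\mathcal{A}|-1$ and $j = a_{|\mathcal{A}|-1} = a$: there exists $\theta^\dagger \in S_a'$ with $\ip{z - e^{(a)}}{D_{\theta^\dagger}} < \frac{1+\epsilon}{1-\epsilon}\log T$, hence $\ip{z}{D_{\theta^\dagger}} < \frac{1+\epsilon}{1-\epsilon}\log T + D(\theta^*_a || \theta^\dagger_a)$. The hypothesis $\max_{\theta,a} D(\theta^*_a || \theta_a) \leq \frac{1+\epsilon}{1-\epsilon}\log T$ then absorbs the extra divergence term and yields the factor of $2$.

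For the per-coordinate bound, fix $\hat{a} \neq a^*$ and let $k'$ be the unique index with $a_{k'} = \hat{a}$; by freezing, $z(\hat{a}) = z_{k'}(\hat{a})$. Applying the one-step constraint with $k = k'$ and $j = \hat{a}$ produces $\theta^\dagger \in S_{\hat{a}}'$ with $(z_{k'}(\hat{a}) - 1) D(\theta^*_{\hat{a}} || \theta^\dagger_{\hat{a}}) \leq \ip{z_{k'} - e^{(\hat{a})}}{D_{\theta^\dagger}} < \frac{1+\epsilon}{1-\epsilon}\log T$, after discarding the other nonnegative coordinates of the inner product. The lemma from the proof of Theorem \ref{thm:tsregret} guaranteeing $\delta_{\hat{a}} = \min_{\theta \in S_{\hat{a}}'} D(\theta^*_{\hat{a}} || \theta_{\hat{a}}) > 0$ gives $D(\theta^*_{\hat{a}} || \theta^\dagger_{\hat{a}}) \geq \delta_{\hat{a}}$, so $z(\hat{a}) \leq 1 + \frac{1}{\delta_{\hat{a}}} \cdot \frac{1+\epsilon}{1-\epsilon}\log T$, and this is at most $\frac{2}{\delta_{\hat{a}}} \cdot \frac{1+\epsilon}{1-\epsilon}\log T$ whenever $\log T \geq \delta_{\hat{a}} \cdot \frac{1-\epsilon}{1+\epsilon}$, which is automatic under the $T$-largeness hypothesis.

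The main obstacle is getting the direction of the KL inequality right. Since $z \succeq z_k$ for all $k$, the elimination threshold $\ip{z_k}{D_\theta} \geq \frac{1+\epsilon}{1-\epsilon}\log T$ only yields a \emph{lower} bound on $\ip{z}{D_\theta}$, which is useless for the relaxation's upper-bound constraint. The crucial move is to exploit instead the sub-threshold ``just before elimination'' constraint $\ip{z_k - e^{(j)}}{D_\theta} < \frac{1+\epsilon}{1-\epsilon}\log T$, which pins $\ip{z}{D_{\theta^\dagger}}$ from \emph{above} precisely when $k$ is chosen to be the last index and $j = a$; the same trick, applied with $j = \hat{a}$ at the elimination time of $\hat{a}$, drives the per-coordinate cap. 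Everything else is bookkeeping against the structural constraints of (\ref{eqn:tsregret}).
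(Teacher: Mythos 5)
Your proposal is correct and follows essentially the same route as the paper: take $z = z_{|\mathcal{A}|-1}$, $a = a_{|\mathcal{A}|-1}$, use the freezing constraints for objective equality, and exploit the sub-threshold constraint $\min_{\theta}\ip{z_k - e^{(j)}}{D_\theta} < \frac{1+\epsilon}{1-\epsilon}\log T$ together with the hypothesis on $\max_{\theta,a} D(\theta^*_a\|\theta_a)$ to get the factor of $2$. The only (immaterial) difference is in the per-coordinate bound, where the paper sandwiches $\delta_{a_j} z_j(a_j) \leq \min_\theta \ip{z_j}{D_\theta} \leq \frac{2(1+\epsilon)}{1-\epsilon}\log T$ directly, while you carry an additive $+1$ and absorb it using the same largeness hypothesis.
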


\begin{proof}
  Take a feasible solution $\{z_k,a_k\}_{k=1}^{|\mathcal{A}|-1}$ for
  the optimization problem (\ref{eqn:tsregret}). We will show that $z
  = z_{|\mathcal{A}|-1}$ and $a = a_{|\mathcal{A}|-1}$ satisfy the
  constraints (\ref{eqn:ubCT1}) above and yield the same objective
  function value in both optimization problems. 

  First,
  \[ ||z||_1 = \sum_{\hat{a} \in \mathcal{A},\hat{a} \neq a^*}
  z(\hat{a}) = \sum_{k=1}^{|\mathcal{A}|-1} z_{|\mathcal{A}|-1}(a_k) =
  \sum_{k=1}^{|\mathcal{A}|-1} z_{k}(a_k), \] as
  $z_{|\mathcal{A}|-1}(a_k) \geq z_k(a_k)$, for all $k \leq
  |\mathcal{A}|-1$, by (\ref{eqn:tsregret}). This shows that the
  objective functions of both (\ref{eqn:tsregret}) and
  (\ref{eqn:ubCT1}) are equal at $\{z_k,a_k\}_{k=1}^{|\mathcal{A}|-1}$
  and $(z,a)$ respectively. \\
   
  Next, for any $1 \leq j \leq |\mathcal{A}|-1$ and the unit vector
  $e^{(j)}$, we have
  \begin{align*}
    \min_{\theta \in S_{a}'} \quad \ip{z}{D_\theta} &= \min_{\theta
      \in S_{a_k}'} \quad \ip{z_k}{D_\theta} \\
    &\leq \min_{\theta \in S_{a_k}'} \quad \ip{z_k -
      e^{(j)}}{D_\theta} + \max_{\theta \in \Theta, a \in \mathcal{A}}
    D(\theta^*_a || \theta_a) \\
    &\leq \frac{1+\epsilon}{1-\epsilon}\log T +
    \frac{1+\epsilon}{1-\epsilon}\log T =
    \frac{2(1+\epsilon)}{1-\epsilon}\log T.
  \end{align*}
  This shows that the penultimate constraint in (\ref{eqn:ubCT1}) is
  satisfied. For the final constraint in (\ref{eqn:ubCT1}), fix $1
  \leq j \leq |\mathcal{A}|-1$, so that we have
  \begin{align*}
    \delta_{a_j}\cdot z(a_j) = \delta_{a_j}\cdot z_j(a_j) \leq
    \min_{\theta \in S_{a}'} \quad \ip{z_j}{D_\theta} \leq
    \frac{2(1+\epsilon)}{1-\epsilon}\log T,
  \end{align*}
  exactly as in the preceding derivation. This implies that
  $z(\hat{a}) \leq \frac{2}{\delta_{\hat{a}}}
  \left(\frac{1+\epsilon}{1-\epsilon}\right) \log T$ for all $\hat{a}
  \neq a^*$.
\end{proof}

\setcounter{thm}{1}
\begin{proposition}
  Let $T$ be large enough such that $\max_{\theta \in \Theta, a \in
    \mathcal{A}} D(\theta^*_a || \theta_a) \leq
  \frac{1+\epsilon}{1-\epsilon}\log T$. Suppose
%  \[\Delta \bydef \min\{D(\theta^*_a||\theta_a): \theta \in \Theta, a \in \mathcal{A},
%  D(\theta^*_a||\theta_a) > 0 \}. \]
 \[\Delta \leq \min_{a \neq a^*} \delta_a  = \min_{a \neq a^*, \theta \in S_a'} D(\theta^*_a || \theta_a).\]
 Suppose also that $L \in \mathbb{Z}^+$ is such that for every $a \neq a^*$ and
 $\theta \in S_a'$,
  \[ |\{\hat{a} \in \mathcal{A}: \hat{a} \neq a^*,
  D(\theta^*_{\hat{a}} || \theta_{\hat{a}}) \geq \Delta \}| \geq L, \]
  i.e., at least $L$ coordinates of $D_\theta$ (excluding the
  $|\mathcal{A}|$-th coordinate $a^*$) are at least $\Delta$. Then,
  \[\mathsf{C}(\log T) \leq \left(\frac{|\mathcal{A}| - L}{\Delta}\right)
  \frac{2(1+\epsilon)}{1-\epsilon} \log T. \]
\end{proposition}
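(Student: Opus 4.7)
The plan is to invoke Lemma \ref{lem:ubCT1}, which relaxes the optimization (\ref{eqn:tsregret}) defining $\mathsf{C}(\log T)$ into the simpler problem (\ref{eqn:ubCT1}), and then to exploit the marginal KL-divergence hypothesis on $\Theta$ to bound the objective $\|z\|_1$ of any feasible point of that relaxation. Under the stated assumption that $T$ is large enough, Lemma \ref{lem:ubCT1} already applies, so it suffices to bound $\|z\|_1$ uniformly over feasible $(z,a)$ for (\ref{eqn:ubCT1}).

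First I would fix an arbitrary feasible pair $(z,a)$ for (\ref{eqn:ubCT1}) and select a $\theta \in S_a'$ that attains the minimum $\min_{\theta \in S_a'} \langle z, D_\theta \rangle$. By the hypothesis of the proposition, there exists a subset $T_\theta \subseteq \mathcal{A} \setminus \{a^*\}$ of size at least $L$ on which every coordinate of $D_\theta$ is at least $\Delta$. Using the fact that $z$ is coordinate-wise nonnegative and that $\langle z, D_\theta \rangle \leq \tfrac{2(1+\epsilon)}{1-\epsilon}\log T$ by the second constraint of (\ref{eqn:ubCT1}), I would estimate
\begin{equation*}
\Delta \sum_{\hat a \in T_\theta} z(\hat a) \;\leq\; \sum_{\hat a \in T_\theta} z(\hat a)\, D(\theta^*_{\hat a}\|\theta_{\hat a}) \;\leq\; \langle z, D_\theta\rangle \;\leq\; \frac{2(1+\epsilon)}{1-\epsilon}\log T,
\end{equation*}
which controls the mass of $z$ concentrated on the ``informative'' coordinates $T_\theta$.

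Next I would handle the remaining coordinates $\hat{a} \in \mathcal{A} \setminus (T_\theta \cup \{a^*\})$, of which there are at most $|\mathcal{A}| - 1 - L$. For each such $\hat{a}$, the third (per-coordinate) constraint of (\ref{eqn:ubCT1}) combined with the lower bound $\delta_{\hat a} \geq \Delta$ from the hypothesis yields
\begin{equation*}
z(\hat a) \;\leq\; \frac{2}{\delta_{\hat a}} \cdot \frac{1+\epsilon}{1-\epsilon} \log T \;\leq\; \frac{2}{\Delta} \cdot \frac{1+\epsilon}{1-\epsilon} \log T.
\end{equation*}

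Finally, summing over both pieces and using $z$ has a zero in its last coordinate,
\begin{equation*}
\|z\|_1 \;=\; \sum_{\hat a \in T_\theta} z(\hat a) \;+\; \sum_{\hat a \notin T_\theta \cup \{a^*\}} z(\hat a) \;\leq\; \Bigl(1 + (|\mathcal{A}| - 1 - L)\Bigr)\frac{2(1+\epsilon)}{\Delta(1-\epsilon)}\log T \;=\; \frac{(|\mathcal{A}|-L)\cdot 2(1+\epsilon)}{\Delta(1-\epsilon)}\log T,
\end{equation*}
which is exactly the claimed bound. Taking the supremum over feasible $(z,a)$ finishes the proof. The calculation is essentially routine once Lemma \ref{lem:ubCT1} is in hand; the only subtlety, and the step I would double-check, is the counting argument ensuring that the set $T_\theta$ of ``strong'' coordinates is disjoint from $\{a^*\}$, so that the per-coordinate budget is applied to exactly $|\mathcal{A}| - 1 - |T_\theta|$ residual coordinates rather than to $|\mathcal{A}| - |T_\theta|$.
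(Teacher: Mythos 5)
Your proof is correct, and it reaches the bound by a genuinely more direct route than the paper. Both arguments start from Lemma \ref{lem:ubCT1}, but from there the paper does something more roundabout: it further relaxes (\ref{eqn:ubCT1}) by clipping the divergence vector to $D_\theta' \bydef \min(D_\theta,\Delta\cdot\mathbf{1})$ and replacing each $\delta_{\hat a}$ by $\Delta$, and then argues by contradiction, comparing any putative feasible $z$ with $\|z\|_1 > (|\mathcal{A}|-L)\chi/\Delta$ against the reference point $y = \chi(\tfrac1\Delta,\ldots,\tfrac1\Delta,0)$ and deriving the strict inequality $\ip{\mathbf{1}}{y-z} < \ip{\mathbf{1}}{y-z}$ from the chain $\ip{D_\theta'}{y-z} \geq \chi(L-1)$ and $D_\theta' \preceq \Delta\cdot\mathbf{1}$, $z \preceq y$. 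Your argument dispenses with the clipping and the contradiction entirely: you split the coordinates into the set $T_\theta$ of at least $L$ ``informative'' actions, whose total $z$-mass is charged against the single aggregate constraint $\ip{z}{D_\theta} \leq \chi$ (costing one unit of $\chi/\Delta$), and the at most $|\mathcal{A}|-1-L$ remaining coordinates, each charged against the per-coordinate budget $z(\hat a) \leq \chi/\delta_{\hat a} \leq \chi/\Delta$; summing gives $(|\mathcal{A}|-L)\chi/\Delta$ directly. The two proofs extract exactly the same information from (\ref{eqn:ubCT1}) (one use of the aggregate constraint plus the per-coordinate caps with $\delta_{\hat a}\geq\Delta$), so neither is stronger, but yours is shorter and makes the accounting transparent — it is clear that the ``$-L$'' comes from the $L$ informative coordinates collectively costing only one unit instead of $L$. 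Your closing worry is a non-issue: the hypothesis defines the informative set inside $\mathcal{A}\setminus\{a^*\}$ and the last coordinate of $z$ is identically zero, so $a^*$ never enters the count.
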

  
\begin{proof}[Proof of Proposition \ref{prop:atleastLmain}]
  Consider a solution $(z, a)$ to a \emph{relaxation} of the
  optimization problem (\ref{eqn:ubCT1}) obtained by replacing
  $\delta_{\hat{a}}$ with $\Delta$ and $D_\theta$ with
  $D_\theta' \bydef \min(D_\theta,\Delta\cdot\mathbf{1}) \preceq
  D_\theta $ \footnote{Here $\mathbf{1}$ represents an all-ones
    vector of dimension $\mathcal{A}$, and the minimum is taken
    coordinatewise. Also, a solution exists since the objective is
    continuous and the feasible region is compact.}. We claim that
  $||z||_1 \equiv \ip{\mathbf{1}}{z}\leq \left(\frac{|\mathcal{A}| -
      L}{\Delta}\right) \chi$ where $\chi \bydef
  \frac{2(1+\epsilon)}{1-\epsilon} \log T$. If not, let $y = \chi
  \left(\frac{1}{\Delta},\ldots,\frac{1}{\Delta},0 \right)$, and
  observe that
  \begin{align*}
    \ip{D_\theta'}{y-z} &= \ip{D_\theta'}{y} - \ip{D_\theta'}{z} \\
    &\geq \chi \cdot L \cdot \Delta \cdot \frac{1}{\Delta} - \chi =
    \chi(L-1).
  \end{align*}
  But then,
  \begin{align*}
    \ip{\mathbf{1}}{y-z} &= \ip{\mathbf{1}}{y} - \ip{\mathbf{1}}{z} \\
    &< \frac{\chi(|\mathcal{A}|-1)}{\Delta} -
    \frac{\chi(|\mathcal{A}|-L)}{\Delta}
    = \frac{\chi(L-1)}{\Delta} \\
    &\leq \frac{\ip{D_\theta'}{y-z}}{\Delta} \\
    &\leq \frac{\ip{\Delta \cdot \mathbf{1}}{y-z}}{\Delta} =
    \ip{\mathbf{1}}{y-z},
  \end{align*}
  since $D_\theta' \preceq \Delta \cdot \mathbf{1}$ by definition and
  $z \preceq y$ by hypothesis. This is a contradiction.
\end{proof}

{\bf Playing Subsets with Max reward:} Let $\beta \in (0,1)$, and
suppose that $\Theta = \{1-\beta^R, 1-\beta^{R-1}, \ldots, 1-\beta^2,
1-\beta\}^N$, for positive integers $R$ and $N$. Consider an $N$ armed
Bernoulli bandit with arm parameters $\mu \in \Theta$. The complex
actions are all size $M$ subsets of the $N$ basic arms, $M \leq
\frac{N-1}{2}$. Let $\mu_{min} \bydef \min_{a \in \mathcal{A}}
\prod_{i \in a} (1-\mu_i)$.

% \begin{corollary}
%   For $\delta, \epsilon \in (0,1)$, there exists $T^\star \geq 0$ such
%   that for all $T \geq T^\star$, with probability at least $1-\delta$:
%   \[ \sum_{a \neq a^*} N_T(a) \leq \mathsf{B}_3 + (\log 2)\left(
%     \frac{1+\epsilon}{1-\epsilon}\right) \left[1+ {N-1
%       \choose M} \right] \frac{ \log T}{\mu^2_{\min}(1-\beta)}.\]
% \end{corollary}

\begin{proof}[Proof of Corollary \ref{cor:corr3}]
  Since the reward from playing a subset $a$ is the maximum
  (equivalently, the Boolean OR) value, the marginal KL divergence
  along action $a$ is simply the Bernoulli KL divergence for the
  product of the parameters: $D(\theta^*_a || \theta_a) = D(\mu_a ||
  \theta_a) = D\left(\prod_{i \in a}(1-\mu_i) || \prod_{i \in a} (1-\theta_i)
  \right)$. \\

 Let us estimate
  \begin{align*}
    \Delta \bydef \min\{D(\mu_a||\theta_a): \theta \in \Theta, a
    \in \mathcal{A}, D(\mu_a||\theta_a) > 0 \}. 
  \end{align*}
  If $\mu_i = 1-\beta^{r_i}$ and $\theta_i = 1-\theta^{s_i}$ for
  integers $r_i, s_i$, $i = 1, 2, \ldots, N$, then Pinsker's
  inequality yields
  \begin{align*}
    D(\mu_a || \theta_a) &\geq \frac{2}{\log 2} \left(\prod_{i \in
        a}(1-\mu_i) - \prod_{i \in a} (1-\theta_i) \right)^2 \\
    &= \frac{2}{\log 2} \left(\beta^{\sum_{i \in a}r_i} -
      \beta^{\sum_{i \in a} s_i} \right)^2 \\
    &= \frac{2}{\log 2} \beta^{2\sum_{i \in a}r_i} \left(1 -
      \beta^{\sum_{i \in a} s_i- \sum_{i \in a}r_i} \right)^2. 
  \end{align*}
  $D(\mu_a||\theta_a) > 0$ if and only if $|\sum_{i \in a} s_i-
  \sum_{i \in a}r_i| \geq 1$. This implies, together with the above,
  that
  \[ \Delta \geq \frac{2 \mu^2_{\min}(1-\beta)}{\log 2}.\]

  Next, we claim that for any $\mu \neq \theta \in \Theta$, $D(\mu_a
  || \theta_a) > 0$ for at least $L = {N-1 \choose M-1} - 1$ size $M$
  subsets/actions $a$. This is because if otherwise, then $\sum_{i \in
    a} r_i = \sum_{i \in a} s_i$ for at least ${N \choose M} - L = {N
    \choose M} - {N-1 \choose M-1}+ 1 = {N-1 \choose M} + 1$ subsets
  $a$. However, a combinatorial result \cite{AhlAydKha03:weight}
  states that the maximum number of weight $M$ vertices of the $N$
  dimensional hypercube (in our case, a size $M$ subset corresponds to
  a weight $M$ vertex) that \emph{do not} span $N$ dimensions is ${N-1
    \choose M}$. This forces $r_i = r_i$ for all $i \in [N]$ and hence
  $\mu = \theta$, a contradiction. \\

  Now, we can apply Proposition \ref{prop:atleastLmain} with $\Delta$ and $L$ as
  above. This gives us that for $T$ large enough, the total number of
  arm plays is bounded above, with probability at least $1-\delta$, by
  \begin{align*}
    \mathsf{B}_3 +& (\log 2)\left(
      \frac{1+\epsilon}{1-\epsilon}\right) \left[ {N \choose M} - {N-1
        \choose M-1} + 1 \right] \frac{ \log
      T}{\mu^2_{\min}(1-\beta)} \\
    &= \mathsf{B}_3 + (\log 2)\left(
      \frac{1+\epsilon}{1-\epsilon}\right) \left[{N-1 \choose M} + 1
    \right] \frac{ \log T}{\mu^2_{\min}(1-\beta)}.
\end{align*}

\end{proof}

\end{document}